\renewcommand{\algorithmiccomment}[1]{\bgroup\hfill//~#1\egroup}
\newtheorem{lemma}{Lemma}
\newtheorem{theorem}{Theorem}
\newtheorem{definition}{Definition}
\definecolor{L}{RGB}{255,190,190}
\definecolor{Ls}{RGB}{223,255,190}
\definecolor{Lt}{RGB}{190,255,255}
\definecolor{I}{RGB}{223,190,255}
\newcommand{\rv}[1]{\mathbf{#1}}
\newcommand{\z}{\rv{z}}
\newcommand{\x}{\rv{x}}
\newcommand{\setstyle}[1]{\mathbb{\uppercase{#1}}}
\newcommand{\smallsetstyle}[1]{{\scriptscriptstyle \setstyle{#1}}}
\newcommand{\wk}[1]{\rv{w}_\smallsetstyle{#1}}
\newcommand{\zk}[1]{\z_\smallsetstyle{#1}}
\newcommand{\Jk}[1]{J_\smallsetstyle{#1}}
\newcommand{\Gk}[1]{G_\smallsetstyle{#1}}
\newcommand{\Ak}[2]{#1_\smallsetstyle{#2}}
\newcommand{\Lk}[1]{\mathcal{L}_\smallsetstyle{#1}}
\newcommand{\Lhatk}[1]{\widehat{\mathcal{L}}_\smallsetstyle{#1}}
\newcommand{\fcontour}[1]{f_\smallsetstyle{#1}(\zk{#1})}
\newcommand{\pmi}[2]{\mathcal{I}_{\smallsetstyle{#1},\smallsetstyle{#2}}}
\newcommand{\pmihat}[2]{\widehat{\mathcal{I}}_{\smallsetstyle{#1},\smallsetstyle{#2}}}
\newcommand{\APk}[2]{#1_{\mathcal{P}_{#2}}}
\newcommand{\Ip}{\mathcal{I}_\mathcal{P}}
\newcommand{\Ihatp}{\widehat{\mathcal{I}}_{\mathcal{P}}}
\newsavebox{\measurebox}
\title{Principal manifold flows}
\author{Edmond Cunningham \\
  \texttt{edmondcunnin@cs.umass.edu} \\
   \And
   Adam Cobb \\
   \texttt{adam.cobb@sri.com} \\
   \AND
   Susmit Jha \\
   \texttt{susmit.jha@sri.com} \\
}
\begin{document}
\maketitle

\begin{abstract}
Normalizing flows map an independent set of latent variables to their samples using a bijective transformation.  Despite the exact correspondence between samples and latent variables, their high level relationship is not well understood.  In this paper we characterize the geometric structure of flows using principal manifolds and understand the relationship between latent variables and samples using contours.  We introduce a novel class of normalizing flows, called principal manifold flows (PF), whose contours are its principal manifolds, and a variant for injective flows (iPF) that is more efficient to train than regular injective flows.  PFs can be constructed using any flow architecture, are trained with a regularized maximum likelihood objective and can perform density estimation on all of their principal manifolds.  In our experiments we show that PFs and iPFs are able to learn the principal manifolds over a variety of datasets.  Additionally, we show that PFs can perform density estimation on data that lie on a manifold with variable dimensionality, which is not possible with existing normalizing flows.
\end{abstract}

\section{Introduction}
A normalizing flow is a generative model that generates a probability distribution by transforming a simple base distribution into a target distribution using a bijective function \citep{pmlr-v37-rezende15,papamakarios_normalizing_2019}.  Despite the fact that flows can compute the log likelihood of their samples exactly and associate a point in the data space with a unique point in the latent space, they are still poorly understood as generative models.  This poor understanding stems from the unidentifiablity of their latent space - the latent space of a flow can be transformed into another valid latent space using any one of an infinite number of volume preserving transformations \citep{nonlinear_ica}.  Furthermore, methods that are rooted in mutual information \citep{alemi_fixing_2018,higgins_beta-vae_2017,chen2016infogan} that are used to understand other kinds of generative models break down when applied to flows because there is a deterministic mapping between the latent and data space \citep{ardizzone_training_2020}.  To understand the generative process of flows, we need two items.  The first is a set of informative structural properties of a probability distribution and the second is knowledge of how changes to the latent variables affect corresponding samples.  We discuss the former using the concept of principal manifolds and the latter using contours.

The principal manifolds of a probability distribution can be understood as manifolds that span directions of maximum change \citep{gorban2008principal}.  We locally define them using principal components which are the orthogonal directions of maximum variance around a data point, the same way that the principal components used in PCA \citep{Jolliffe2011} are the orthogonal directions of maximum variance of a Gaussian approximation of a dataset.  Principal manifolds capture the geometric structure of a probability distribution and are also an excellent fit for normalizing flows because it is possible to compute the principal components of a flow due to the bijective mapping between the latent and data spaces (see \cref{definition:principal components of a flow}).

The relationship between changes to latent variables and the effect on the corresponding sample in the data space can be understood through the contours of a flow.  The contours of a flow are manifolds that trace the path that a sample can take when only some latent variables are changed.  An important relationship we investigate is how the probability density on the contours relates to the probability density under the full model.  This insight gives us a novel way to reason about how flows assign density to its samples.

We introduce a class of normalizing flows called principal manifold flows (PFs) whose contours are its principal manifolds. We develop deep insights into the generative behavior of normalizing flows that help explain how flows assign density to data points.  This directly leads to a novel test time algorithm for density estimation on manifolds that requires no assumptions about the underlying data dimensionality.  Furthermore, we develop two new algorithms that tackle separate important problems.  The first is a learning algorithm to train PFs using any flow architecture, even those that can be difficult to invert, at a comparable cost to standard maximum likelihood.  The second is an algorithm to train injective PFs that optimize a regularized maximum likelihood objective without needing to optimize a computationally expensive term found in the injective change of variables formula \citep{gemici_normalizing_2016}.  In our experiments we demonstrate the capabilities of PFs by learning the principal manifolds of low dimensional data and high dimensional data that is embedded on a low dimensional manifold, and show that PFs can learn the density of data that lies on a variable dimensional manifold - a task not possible using existing flow based methods.  To summarize, our contributions are as follows:
\begin{enumerate}
    \item We introduce a novel class of flows called PFs whose contours are principal manifolds and propose an efficient learning algorithm.
    \item To overcome the computational cost of computing the expensive Jacobian determinant for PFs, we introduce iPFs as an approach for extending PFs to higher dimensional problems.
    \item We introduce the first flow based solution to learning densities on manifolds with varying dimensionality.
\end{enumerate}

\begin{figure}[t]
    \centering
    \includegraphics[width=0.4\textwidth]{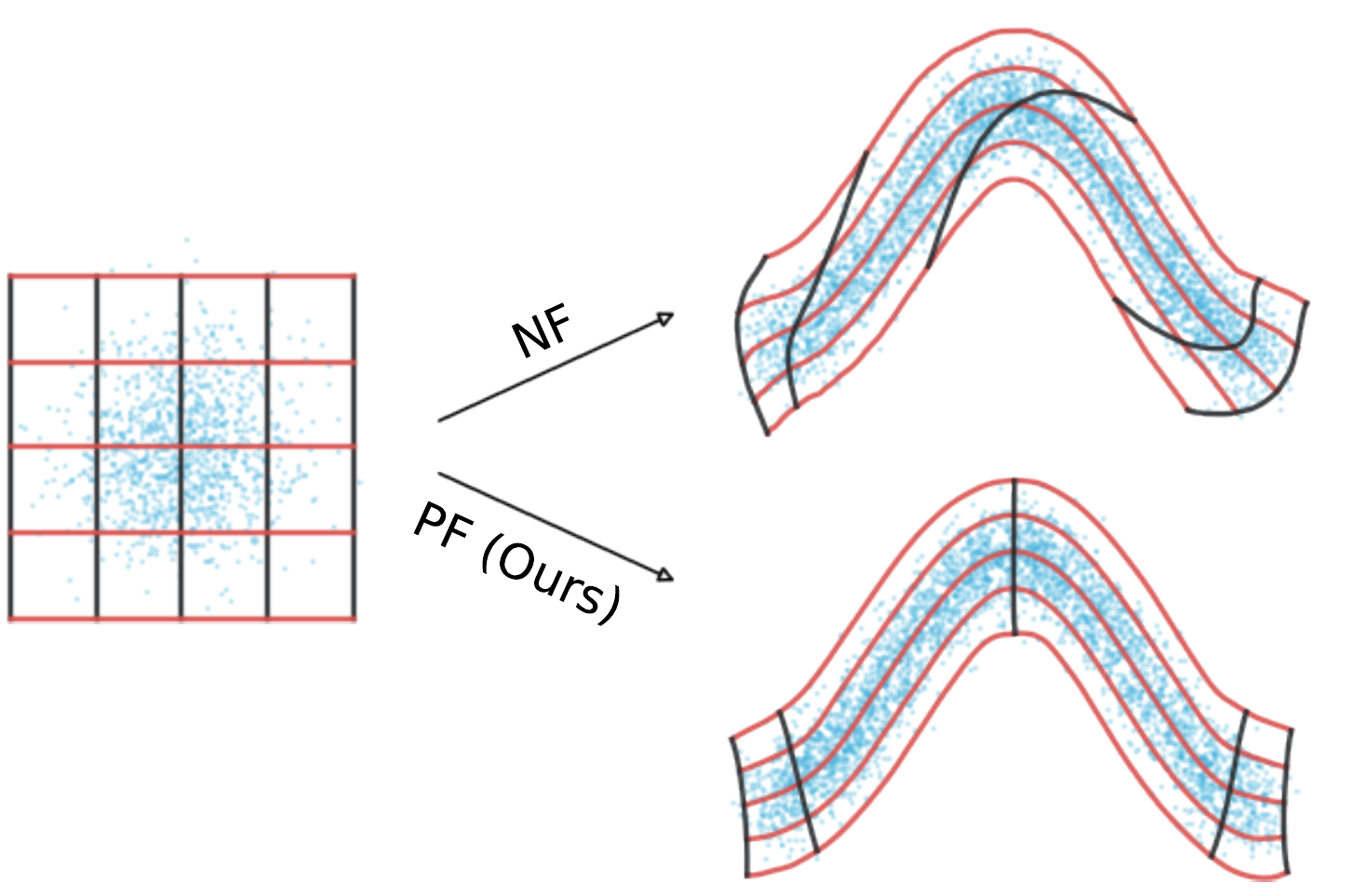}
    \caption{A principal manifold flow (PF) has contours that are its principal manifolds while standard normalizing flows do not.  The blue dots represent samples from the prior and from each model, on the left and right plots respectively.  The red and black lines represent the contours that emerge when a latent variable is held constant and the others vary.  Movement in the latent space of a PF corresponds to movement along the principal manifolds (\cref{theorem:principal manifolds flow}).}
    \label{fig:simple example}
\end{figure}

\section{Preliminaries}

\subsection{Normalizing flows}\label{subsection:nf}
Let $f:\mathcal{Z}=\mathbb{R}^N \to \mathcal{X}=\mathbb{R}^N$ be a parametric bijective function from a latent variable, $\z$, to a data point $\x=f(\z)$ with inverse $g(\x)=f^{-1}(\x)$.  The prior distribution over $\z$ will be denoted with $p_\z(\z)$ and the Jacobian matrix of $f$ and $g$ will be denoted by $J=\frac{df(\z)}{d\z}$ and $G=\frac{dg(\x)}{d\x}$. The dependence of $J$ and $G$ on $\z$ or $\x$ is implied.  A normalizing flow is a model that generates data by sampling $\z\sim p_\z(\z)$ and then computing $\x=f(\z)$ \citep{pmlr-v37-rezende15,papamakarios_normalizing_2019}.  The probability density of data points is computed using the change of variables formula:
\begin{align}\label{eq:change of variables}
    \log p_\x(\x) = \log p_\z(g(\x)) + \log|G|
\end{align}
Flows are typically comprised of a sequence of invertible functions $f=f_1 \cdots f_i \cdots f_K$ where each $f_i$ has a Jacobian determinant that is easy to compute so that the overall Jacobian determinant, $\log|G|=\sum_{i=1}^K \log|G_i|$ is also easy to compute.  As a result, flows can be trained for maximum likelihood using an unbiased objective.

\cref{eq:change of variables} can be generalized to the case where $f$ is an injective function that maps from a low dimensional $\z$ to a higher dimensional $\x$ \citep{gemici_normalizing_2016,caterini_rectangular_2021}.  The change of variables formula in this case is written as:
\begin{align}\label{eq:change of variables general}
    \log p_\x(\x) = \log p_\z(\z) - \frac{1}{2}\log|J^TJ|,\quad \x=f(\z)
\end{align}
This general change of variables formula is valid over the manifold defined by $f(\mathcal{Z})$.  However, it is difficult to work with because the term $\log|J^TJ|$ cannot be easily decomposed into a sequence of simple Jacobian determinants as in the case where $\dim(\z)=\dim(\x)$.  In the remainder of this paper, $\log p_\x(\x)$ will refer to the definition given in \cref{eq:change of variables general} unless stated otherwise.

The requirement that $f$ is bijective is a curse and a blessing.  The constraint prohibits flows from learning probability distributions with topology that does not match that of the prior \citep{cornish2019relaxing}. This constraint limits a flow's ability to learn the exact distribution of many real world datasets, including those that are thought to satisfy the manifold hypothesis \citep{fefferman_testing_2013}.  Nevertheless, invertibility makes it possible to compute the exact log likelihood under the model, associate any data point with a unique latent space vector, and affords access to geometric properties of the flow's distribution \citep{dombrowski2021diffeomorphic}.  In the remainder of this paper we focus on the latter - the geometric properties of a flow's distribution through the use of principal manifolds and contours.

\subsection{Principal components of a flow}\label{sec:principal components of flow}


\begin{figure*}[t]
    \centering
    \includegraphics[width=0.65\linewidth]{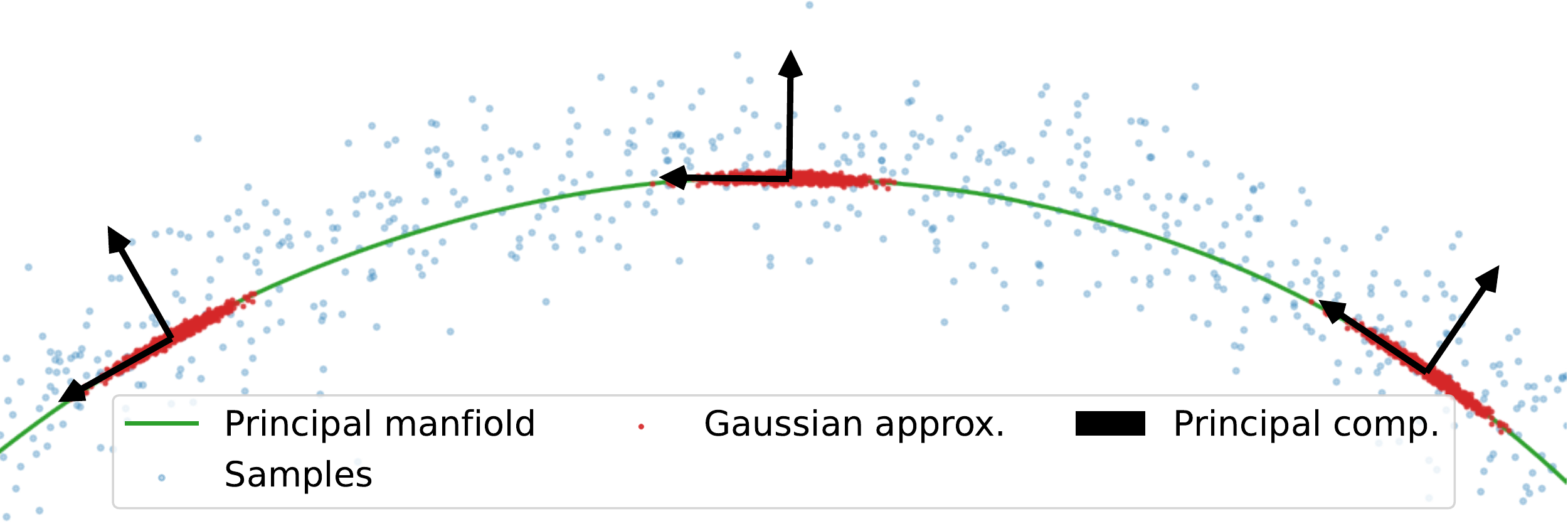}
    \caption{Illustration of principal components and a principal manifold of a flow.  The blue dots are samples from a normalizing flow, red dots are samples from the Gaussian approximation defined in \cref{definition:principal components of a flow}, black arrows are the principal components and the green line is a principal manifold in \cref{definition:principal manifold of a flow}.  Our main result in \cref{theorem:principal manifolds flow} states that the contours of principal manifold flows are its principal manifolds.}
    \label{fig:structure}
\end{figure*}

The structure of a probability distribution that is generated by a normalizing flow can be locally defined by examining how samples from the model are distributed around a data point.

\begin{lemma}\label{lemma:linearization of a flow}
Let $\x$ be a data point, $f$ be the invertible function of a flow and $\sigma>0$ be a scalar.  Consider samples that are generated by $\x'=f(\z+\sigma\rv{\epsilon})$ where $\z=f^{-1}(\x)$ and $\rv{\epsilon} \sim N(0,I)$.  Then $\frac{1}{\sigma}(\x'-\x) \overset{D}{\to} N(0,JJ^T)$ as $\sigma\to 0$.
\end{lemma}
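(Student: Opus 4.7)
The plan is to recognize this as essentially a statement of the delta method applied to the deterministic smooth map $f$, driven by a Gaussian perturbation whose scale $\sigma$ shrinks to zero. Since $f$ is the bijective (hence at least $C^1$) map of a flow, I would start by writing a first-order Taylor expansion of $f$ around $\z$:
\begin{equation*}
    f(\z + \sigma \rv{\epsilon}) = f(\z) + \sigma J \rv{\epsilon} + R(\sigma, \rv{\epsilon}),
\end{equation*}
where the remainder satisfies $\|R(\sigma, \rv{\epsilon})\| = o(\sigma \|\rv{\epsilon}\|)$ as $\sigma \to 0$, for each fixed $\rv{\epsilon}$. Dividing by $\sigma$ yields
\begin{equation*}
    \tfrac{1}{\sigma}(\x' - \x) = J\rv{\epsilon} + \tfrac{1}{\sigma} R(\sigma, \rv{\epsilon}).
\end{equation*}

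Next I would handle the two terms on the right separately. The leading term $J \rv{\epsilon}$ is an affine function of a standard Gaussian with constant (i.e.\ $\sigma$-independent) matrix $J$ evaluated at the fixed point $\z$, so its exact distribution is $N(0, J J^T)$ — this is the target limit. The remainder term $\tfrac{1}{\sigma} R(\sigma, \rv{\epsilon})$ converges to $0$ almost surely as $\sigma \to 0$, since $\rv{\epsilon}$ is almost surely finite and the Taylor bound gives $\tfrac{1}{\sigma}\|R(\sigma, \rv{\epsilon})\| \to 0$ for each such realization. Almost sure convergence implies convergence in probability, so $\tfrac{1}{\sigma} R(\sigma, \rv{\epsilon}) \xrightarrow{P} 0$.

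Finally, I would invoke Slutsky's theorem: the sum of a random variable with the fixed distribution $N(0, J J^T)$ and a term vanishing in probability converges in distribution to $N(0, J J^T)$. This gives the claim $\tfrac{1}{\sigma}(\x' - \x) \xrightarrow{D} N(0, J J^T)$.

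The argument is essentially routine once differentiability of $f$ at $\z$ is in hand, so there is no single hard step; the only mild subtlety is being careful that the vanishing remainder is controlled uniformly enough to pass from pointwise convergence in $\rv{\epsilon}$ to convergence in distribution, which Slutsky handles cleanly because the limit distribution of the leading term does not depend on $\sigma$. An equivalent route would be to apply the continuous mapping theorem directly to the map $\rv{\epsilon} \mapsto \tfrac{1}{\sigma}(f(\z + \sigma \rv{\epsilon}) - f(\z))$, which converges pointwise to the continuous map $\rv{\epsilon} \mapsto J \rv{\epsilon}$; either way the conclusion follows without heavy machinery.
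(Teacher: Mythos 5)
Your proof is correct and matches the paper's approach: the paper simply cites the Delta method, and your Taylor-expansion-plus-Slutsky argument is exactly the standard proof of that method specialized to the fixed point $\z$, where the leading term $J\rv{\epsilon}$ is already $N(0,JJ^T)$ and the remainder vanishes in probability (indeed almost surely) as $\sigma\to 0$. No gaps; the paper just leaves these details implicit by invoking the citation.
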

The lemma is true as a direct consequence of the Delta method \citep{doi:10.1080/00031305.1992.10475842}.  \cref{lemma:linearization of a flow} says that points generated by a flow in a small region around a fixed point $\x$ will be approximately distributed as a Gaussian with mean $\x$ and covariance proportional to $JJ^T$.  The principal components of data generated by a Gaussian distribution are the eigenvectors of the covariance matrix, so we can use the eigenvectors of $JJ^T$ to define the principal components of a flow.

\begin{definition}[Principal components of a flow at $x$]\label{definition:principal components of a flow}
The principal components of a flow at $\x=f(\z)$ are the eigenvectors of $JJ^T$, $\hat{\rv{w}}$, where $J=\frac{df(\z)}{d\z}$.  The principal components are ordered according to the eigenvalues of $JJ^T$.
\end{definition}

The concept of principal components is shown in \cref{fig:structure}.  Blue dots represent samples from a flow, red dots are samples from the local approximations drawn according to \cref{lemma:linearization of a flow} and black arrows represent the principal components computed using \cref{definition:principal components of a flow}.  We see that the red dots are approximately distributed as a Gaussian and the black arrows span their principal directions.  Furthermore, the principal components are oriented along the main structure of the data.
The global structure of a flow, which we call the "principal manifolds", are found by integrating along the principal components.
\begin{definition}[Principal manifold of a flow]\label{definition:principal manifold of a flow}
The principal manifold of a flow is the path formed by integrating along principal components starting at $\x_0$.  Let $\setstyle{k}$ be a subset of $[1,\dots,\dim(\z)]$ and $t\in \mathbb{R}^{|\setstyle{k}|}$.  A principal manifold of dimension $|\setstyle{k}|$ is the solution to
\begin{align}
    \frac{d\x(t)}{dt} = \wk{k}(\x(t)),\quad \x(0) = \x_0
\end{align}
where $\wk{k}(\x(t))=\hat{\rv{w}}_{\smallsetstyle{k}}\sqrt{\Lambda_{\smallsetstyle{k}}}$ are the principal components with indices in $\setstyle{k}$ at $\x(t)$ scaled by the square root of their corresponding eigenvalues.
\end{definition}
The green curve in \cref{fig:structure} is one of an infinite number of principal manifolds of the distribution.  It spans the main structure of the samples and has principal component tangents.  Principal manifolds can be used to reason about the geometric structure of a flow, but can only be found via integration over the principal components.  Furthermore, there is no clear way compute the probability density over the principal manifolds.  This is crucial when a principal manifold is used as a low dimensional representation of data and we still want to perform density estimation.  We will revisit principal manifolds in \cref{section:pca flows}.

\subsection{Contours of a normalizing flow}\label{subsection:contour}
The tool we use to analyze the generative properties of flows are the contours that emerge when some latent variables are held constant while others vary.  
\begin{definition}[Contours of a flow]\label{def:contour}
Let $\setstyle{k}$ be a subset of $[1,\dots,\dim(\z)]$ and $\zk{k}$ be the latent variables with indices in $\setstyle{k}$.  Then, the contour obtained by varying $\zk{k}$ and fixing all other variables is denoted as  $\fcontour{k}$.
\end{definition}
We assume that there is a partition over the indices of the latent space, $\mathcal{P}$, so that every set of indices that we use to form contours is an element of the partition: $\setstyle{k} \in \mathcal{P}$.  Additionally, we assume that the prior over $\z$ can be factored into independent components in order to isolate a prior for each contour: $p_\z(\z)=\prod_{\setstyle{k}\in \mathcal{P}}p_\smallsetstyle{k}(\z_\smallsetstyle{k})$.  This is not a limiting assumption as most flow architectures typically use a fully factorized prior such as a unit Gaussian prior \citep{papamakarios_normalizing_2019}.
The curved red and black lines in the right side plots of \cref{fig:simple example} are examples of contours.  A red line on the left side plot is created by varying $\z_1$ and fixing $\z_2$ and becomes the contour $f_1(\z_1)$ after it is passed through the flow.  Similarly, a black line on the left side plot is formed by varying $\z_2$ and fixing $\z_1$ and becomes the contour $f_2(\z_2)$ when it is transformed by the flow.
The Jacobian matrix of $\fcontour{k}$ is denoted by $\Jk{k}$ and is equal the matrix containing the columns of $J$ with indices in $\setstyle{k}$.  The log likelihood of a contour is denoted by $\mathcal{L}_\setstyle{k}$ and is computed using the change of variables formula on manifolds in \cref{eq:change of variables general}:
\begin{align}\label{eq:Lk}
    \Lk{k} \overset{{\scriptstyle \Delta}}{=} \log p(\fcontour{k}) = \log p_{\setstyle{k}}(\zk{k}) - \frac{1}{2} \log|\Jk{k}^T \Jk{k}|
\end{align}
A single flow can assign many different log likelihoods to a given data point that are not given by \cref{eq:change of variables}. Each of the $|\mathcal{P}|$ contours that intersect at a data point can assign a density given by \cref{eq:Lk}. Furthermore, the contours formed by grouping multiple $\zk{k}$ will assign other densities.  In the next section we will relate all of these different densities.

\subsection{Pointwise mutual information between contours}\label{subsection:pmi}
Consider two disjoint subsets of a latent variable, $\zk{s}$ and $\zk{t}$, and their union $\zk{s+t}$.  The densities of each contour can be related to the densities of their union using pointwise mutual information.

\begin{definition}[Pointwise mutual information between disjoint contours]\label{def:pmi}
Let $\zk{s}$ and $\zk{t}$ be disjoint subsets of $z$.  The pointwise mutual information between the contours for $\zk{s}$ and $\zk{t}$ is defined as
\begin{align}\label{eq:pmi}
    \pmi{s}{t} &\overset{{\scriptstyle \Delta}}{=} \log \frac{p(\fcontour{s+t})}{p(\fcontour{s})p(\fcontour{t})} 
    = \Lk{s+t} - \Lk{s} - \Lk{t}
\end{align}
\end{definition}

$\pmi{s}{t}$ plays an important role in describing the behavior of normalizing flows.  We list a few important facts about $\pmi{s}{t}$ below to help build intuition (more facts and proofs can be found in \cref{appendix:contour cookbook}):
\paragraph{Facts about $\pmi{s}{t}$}
\begin{enumerate}
    \item $\pmi{s}{t} \geq 0$ \label{fact: 1}
    \item $\pmi{s}{t}=0$ iff $\fcontour{s}$ and $\fcontour{t}$ intersect orthogonally. \label{fact: 2}
    \item $\pmi{s}{t}= - \frac{1}{2}\log|\Jk{s+t}^T\Jk{s+t}| + \frac{1}{2}\log|\Jk{s}^T\Jk{s}| + \frac{1}{2}\log|\Jk{t}^T\Jk{t}|$ \label{fact: 4}
\end{enumerate}
$\pmi{s}{t}$ is a non-negative value that achieves its minimum of $0$ when the contours intersect orthogonally.  An equivalent condition is that the columns of $\Jk{s}$ and $\Jk{t}$ are mutually orthogonal. The third fact shows that $\pmi{s}{t}$ has a closed form value that only depends on $f$ and not on the priors over $\zk{s}$ or $\zk{t}$. Another way to think about $\pmi{s}{t}$ is as the difference between the contour log likelihoods and that of their union:
\begin{align}
    \Lk{s+t} = \Lk{s} + \Lk{t} + \pmi{s}{t} \label{eq:general 2 partition}
\end{align}
If $\setstyle{s}+\setstyle{t}=[1,\dots,\dim(\z)]$, then we can decompose the change of variables formula into the sum of contour log likelihoods and $\pmi{s}{t}$.
Next, we show that this decomposition can be extended to any partition of the latent space.

\subsection{Change of variables formula decomposition}\label{subsection:change of variables decomp}

\begin{figure}[t]
    \centering
    \begin{subfigure}[t]{0.45\linewidth}
        \centering
        \includegraphics[width=0.5\textwidth]{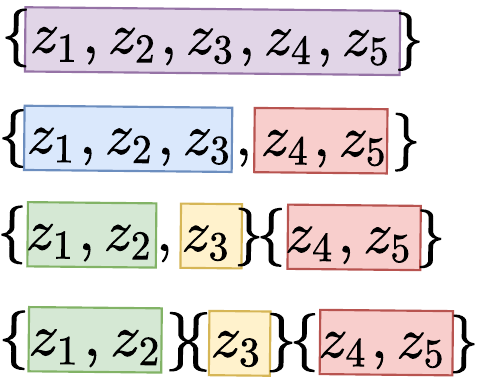}
        \caption{Example partitions of the latent space.}
        \label{fig:partition}
    \end{subfigure}\hspace{0.5cm}%
    \begin{subfigure}[t]{0.45\linewidth}
        \centering
        \includegraphics[width=0.5\textwidth]{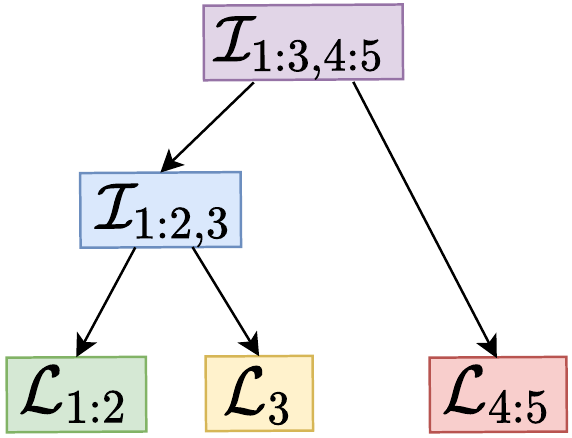}
        \caption{Decomposition of log likelihood.}
        \label{fig:partition terms}
    \end{subfigure}
    \caption{The general change of variables decomposition depends on the binary tree partition used to generate the latent space partition.  Partitions are generated by recursively dividing existing partitions into two parts.  The last row in \cref{fig:partition} shows a partition of the latent space with 3 sets.  \cref{fig:partition terms} shows the corresponding log likelihood decomposition.  Each parent node in the binary tree contributes an $\mathcal{I}$ term and each leaf contributes a $\mathcal{L}$ term.  See \cref{eq:generalized decomposition} for the full formula.}
    \label{fig:partition example}
\end{figure}

\cref{eq:general 2 partition} can be recursively applied to itself to yield a decomposition over any partition of the latent space.  Consider a partition of the indices, $\mathcal{P}$, like in \cref{fig:partition}.  $\mathcal{P}$ can be constructed as the leaves of a binary tree, $\mathcal{T}$, where each node is a subset of indices and each parent node is the union of its children.  The corresponding decomposition in \cref{fig:partition terms} is found by recursively applying \cref{eq:general 2 partition} and tracking the leftover $\mathcal{I}$ terms.  This construction lets us decompose the change of variables formula into the sum of contours log likelihoods and pointwise mutual information terms:

\begin{align} \label{eq:generalized decomposition}
    \log p_\x(f(\z)) = \sum_{\setstyle{k}\in \mathcal{P}}\Lk{k} + \underbrace{\sum_{\setstyle{p} \in \text{parents}(\mathcal{T})}\pmi{\text{l}(p)}{\text{r}(p)}}_{\Ip}
\end{align}

where $\text{L}({p})$ and $\text{R}({p})$ are the left and right children of $\setstyle{p}$, respectively.  See \cref{fig:partition example} for a visual description.  Notice that the sum of the various $\pmi{\text{l}(p)}{\text{r}(p)}$ is independent of the choice of $\mathcal{T}$ because any $\mathcal{T}$ with the same leaves will have the same value of $\log p_\x(f(\z))$ and $\sum_{\setstyle{k}\in \mathcal{P}}\Lk{k}$.  This non-negative quantity is useful to know as it is the difference between the full log likelihood and sum of log likelihoods of the contours.
\begin{definition}[Pointwise mutual information of a partition]
Let $\mathcal{P}$ be a partition of $[1,\dots,\dim(\z)]$.  The pointwise mutual information of the flow whose latent space is partitioned by $\mathcal{P}$ is
\begin{align}
    \Ip &\overset{{\scriptstyle \Delta}}{=} \log p_\x(f(\z)) - \sum_{\setstyle{k}\in \mathcal{P}}\Lk{k}
\end{align}
\end{definition}

\cref{eq:generalized decomposition} gives insight into how normalizing flows assign density to data.  The log likelihood of a data point under a normalizing flow is the sum of the log likelihoods under its contours, and a non-negative term that roughly measures the orthogonality of the contours.  If during training \cref{eq:generalized decomposition} is maximized, as is the case in maximum likelihood learning, then $\Ip$ will surely not achieve its minimum value of $0$.  Therefore changes to different latent variables of a normalizing flow trained with maximum likelihood will likely produce similar changes in the data space.

\subsection{Orthogonality condition using g}
We have seen that $\Ip$ is a non-negative term that achieves it minimum of $0$ when the contours of the flow are orthogonal.  However obtaining its value requires computing columns of the Jacobian matrix of $f(\z)$.  Many expressive normalizing flows layers \citep{huang_convex_2021,chen_residual_2019,vdberg2018sylvester} are constructed so that only $g(\x)$ is easy to evaluate while $f(\z)$ requires an expensive algorithm that can be difficult to differentiate.  We introduce a novel alternate formulation of $\Lk{k}$, $\pmi{s}{t}$ and $\Ip$ that can be computed with $g(\x)$ to mitigate this issue:
\begin{align}
    \Lhatk{k} &\overset{{\scriptstyle \Delta}}{=} \log p_{\setstyle{k}}(\zk{k}) + \frac{1}{2} \log|\Gk{k}\Gk{k}^T| \\
    \pmihat{s}{t} &\overset{{\scriptstyle \Delta}}{=} \Lhatk{s+t} - \Lhatk{s} - \Lhatk{t} \\
    \Ihatp &\overset{{\scriptstyle \Delta}}{=} \log p_\x(\x) - \sum_{\setstyle{k}\in \mathcal{P}}\Lhatk{k}
\end{align}
In contrast to $\pmi{s}{t}$ and $\Ip$, $\pmihat{s}{t}$ and $\Ihatp$ are both negative $\pmihat{s}{t},\Ihatp \leq 0$. See \cref{appendix:contour cookbook} for more properties.  The most important of these properties is the following Lemma:

\begin{lemma}\label{lemma:inverse I}
$\Ihatp = 0$ if and only if $\Ip = 0$.
\end{lemma}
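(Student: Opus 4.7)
The plan is to show both $\Ip$ and $\Ihatp$ collapse to Fischer-type log-determinant expressions and vanish under the \emph{same} block-diagonality condition on the Jacobian.

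\textbf{Step 1 (closed forms).} I would substitute the definitions of $\Lk{k}$ and $\Lhatk{k}$ into those of $\Ip$ and $\Ihatp$. Using the factored prior $p_\z(\z)=\prod_{\smallsetstyle{k}\in\mathcal{P}} p_{\smallsetstyle{k}}(\zk{k})$ together with the two equivalent change-of-variables formulas $\log p_\x(f(\z))=\log p_\z(\z)-\tfrac{1}{2}\log|J^T J|$ and $\log p_\x(\x)=\log p_\z(g(\x))+\tfrac{1}{2}\log|G G^T|$, the prior and full-Jacobian terms cancel and one obtains
\[
\Ip=\tfrac{1}{2}\Bigl(\sum_{\smallsetstyle{k}\in\mathcal{P}}\log|\Jk{k}^T\Jk{k}|\;-\;\log|J^T J|\Bigr),\qquad \Ihatp=\tfrac{1}{2}\Bigl(\log|G G^T|\;-\;\sum_{\smallsetstyle{k}\in\mathcal{P}}\log|\Gk{k}\Gk{k}^T|\Bigr).
\]

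\textbf{Step 2 (Fischer's inequality).} For any symmetric positive-definite matrix $M$ partitioned according to $\mathcal{P}$, Fischer's inequality yields $|M|\le\prod_{\smallsetstyle{k}}|M_{\smallsetstyle{k}\smallsetstyle{k}}|$ with equality if and only if every off-diagonal block of $M$ vanishes. Applying this to $M=J^T J$ (whose $\smallsetstyle{k}$-diagonal blocks are exactly $\Jk{k}^T\Jk{k}$) and to $M=G G^T$ (whose $\smallsetstyle{k}$-diagonal blocks are $\Gk{k}\Gk{k}^T$) shows that $\Ip=0$ iff $J^T J$ is block diagonal with respect to $\mathcal{P}$, and $\Ihatp=0$ iff $G G^T$ is block diagonal with respect to $\mathcal{P}$.

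\textbf{Step 3 (passing between $J$ and $G$).} Since $g=f^{-1}$ we have $G=J^{-1}$ and hence $G G^T=(J^T J)^{-1}$. The inverse of a symmetric positive-definite matrix that is block diagonal with respect to $\mathcal{P}$ is itself block diagonal with respect to $\mathcal{P}$ (each diagonal block being individually inverted), so $J^T J$ is block diagonal w.r.t.\ $\mathcal{P}$ if and only if $G G^T$ is. Chaining the three equivalences gives $\Ip=0\iff\Ihatp=0$.

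The only delicate point is stating Fischer's inequality with the correct equality characterization (standard but easy to misstate); the remainder is pure linear-algebra bookkeeping, and the bijective setting ensures that $G=J^{-1}$ exists and that every determinant involved is strictly positive.
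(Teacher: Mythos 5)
Your proof is correct, but it takes a different route from the paper's. The paper proves this equivalence (its claim 16, which, like your argument, assumes $\dim(\x)=\dim(\z)$ so that $G=J^{-1}$) by invoking its structural characterizations: $\Ip=0$ iff $J=U^\parallel\Sigma V^T$ with $V^T$ block diagonal with orthogonal blocks (claim \ref{claim:4.5}), and the transposed statement for $G$ and $\Ihatp=0$ (claim \ref{claim:10}); inverting the decomposition of $J$ gives exactly the required decomposition of $G$, and vice versa. You instead stay at the level of the Gram matrices: starting from the closed forms (the paper's claims \ref{claim:1.5} and \ref{claim:6.5}), Fischer's inequality with its equality case gives ``$\Ip=0$ iff $J^TJ$ is block diagonal w.r.t.\ $\mathcal{P}$'' and ``$\Ihatp=0$ iff $GG^T$ is block diagonal,'' and then $GG^T=(J^TJ)^{-1}$ together with the fact that block diagonality of an SPD matrix is preserved under inversion closes the loop. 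Your route is more elementary and self-contained for this particular lemma, since it sidesteps the SVD characterizations and their inductive proofs (and as a bonus it reproves the sign facts $\Ip\geq 0$, $\Ihatp\leq 0$ in one stroke); the paper's route reuses machinery it needs anyway for \cref{theorem:principal manifolds flow}, and its decomposition-based argument is what later identifies contours with principal components. Two points you handle correctly but should keep explicit: the equality case of Fischer's inequality is only ``off-diagonal blocks vanish'' for positive definite matrices, which holds here precisely because the flow is bijective, and the whole argument relies on $G=J^{-1}$, so like the paper's proof it covers the square case only (the injective case with $\dim(\x)>\dim(\z)$ is not claimed).
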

We will see that flows that satisfy $\Ip=0$ are of interest, so this lemma provides an equivalent condition that can be computed by any normalizing flow architecture.

\section{Principal manifold flows}\label{section:pca flows}
We now present our main contributions.
We will first define PFs and discuss their theoretical properties then discuss learning algorithms to train PFs and injective PFs (iPF).

\subsection{PF theory}
Next, we formally define PFs and provide a theorem stating their primary feature (see \cref{appendix:principal manifolds flow theorem} for the proof).

\begin{definition}[principal manifold flow]\label{def:PF}
A principal manifold flow (PF) is a normalizing flow that satisfies $\Ip=0$ at all of its samples.
\end{definition}

\begin{theorem}[Contours of PFs]\label{theorem:principal manifolds flow}
The contours of a principal manifold flow are principal manifolds.
\end{theorem}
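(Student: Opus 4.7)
The plan is to exploit the fact that $\Ip = 0$ forces a rigid orthogonality structure on the Jacobian, and then show this orthogonality makes the contour tangent spaces coincide with principal component subspaces. The three-step plan is: (i) reduce $\Ip=0$ to block orthogonality of $J$ with respect to $\mathcal{P}$, (ii) show by SVD that block orthogonality of $J^T J$ implies the column space of each $J_\setstyle{k}$ is spanned by $|\setstyle{k}|$ eigenvectors of $JJ^T$, and (iii) identify the contour $\fcontour{k}$ as a principal manifold via tangent space matching.

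First, I would reduce the defining condition $\Ip=0$ to orthogonality across blocks of $J$. By the decomposition displayed just before \cref{eq:generalized decomposition}, $\Ip = \sum_{p \in \text{parents}(\mathcal{T})} \pmi{\text{l}(p)}{\text{r}(p)}$, and by Fact~1 each summand is non-negative, so $\Ip=0$ forces every $\pmi{\text{l}(p)}{\text{r}(p)}=0$. Applying Fact~2 at each parent node gives orthogonality of the columns of $J_{\text{l}(p)}$ and $J_{\text{r}(p)}$. A short induction on the levels of $\mathcal{T}$ (merging orthogonal blocks stays orthogonal to any sibling) yields $J_\setstyle{s}^T J_\setstyle{t}=0$ for all distinct $\setstyle{s},\setstyle{t}\in\mathcal{P}$. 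Equivalently, $J^T J$ is block diagonal with blocks $J_\setstyle{k}^T J_\setstyle{k}$ indexed by $\mathcal{P}$.

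Second, I would use SVD to relate the column spaces of the $J_\setstyle{k}$ to the principal components. Writing $J = U\Sigma V^T$, the block diagonality of $J^T J = V\Sigma^2 V^T$ means $V$ can be chosen to respect $\mathcal{P}$, i.e., each right singular vector is supported on indices within a single $\setstyle{k}\in\mathcal{P}$. Partition $U,\Sigma,V$ into blocks $U_\setstyle{k},\Sigma_\setstyle{k},V_\setstyle{k}$ accordingly; then $J_\setstyle{k}=U_\setstyle{k}\Sigma_\setstyle{k}V_\setstyle{k}^T$ and the column space of $J_\setstyle{k}$ equals the span of the $|\setstyle{k}|$ columns of $U_\setstyle{k}$. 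Since $U$ diagonalizes $JJ^T$, the columns of $U_\setstyle{k}$ are exactly a subset of $|\setstyle{k}|$ principal components of the flow in the sense of \cref{definition:principal components of a flow}.

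Third, I would conclude by identifying $\fcontour{k}$ with a principal manifold. The tangent space of $\fcontour{k}$ at any point is spanned by $\partial f/\partial z_i$ for $i\in\setstyle{k}$, which are the columns of $J_\setstyle{k}$; step two shows this equals the span of $|\setstyle{k}|$ principal components of $JJ^T$. Hence $\fcontour{k}$ is a smooth $|\setstyle{k}|$-dimensional submanifold whose tangent distribution is exactly the principal component distribution appearing in \cref{definition:principal manifold of a flow}. Since the columns of $J_\setstyle{k}$ are partial derivatives of $f$ and therefore commute as vector fields, the distribution is involutive and the ODE of \cref{definition:principal manifold of a flow} admits $\fcontour{k}$ as its integral manifold (up to a smooth reparameterization rescaling each direction by $\sqrt{\Lambda_\setstyle{k}}$). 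The main obstacle is this last identification: one must interpret a principal manifold as the image submanifold of the solution of the vector PDE rather than as a specific parameterization, and one must handle eigenvalue multiplicities of $JJ^T$ where the individual principal components are non-unique but the subspace spanned by $U_\setstyle{k}$ is still well-defined, which suffices for the tangent space match.
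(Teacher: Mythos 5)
Your proposal is correct and follows essentially the same route as the paper: your steps (i)--(ii) reproduce the paper's characterization of $\Ip=0$ via the block SVD structure of $J$ (its claim that $J=U^\parallel\Sigma\,V^T$ with $V$ block diagonal over $\mathcal{P}$) and its lemma that each contour is spanned by a unique set of principal components of $JJ^T$. The only difference is in the finishing step and is cosmetic: the paper pulls the principal-manifold equation back through $f$ using $J^+$ to show the solution moves only along the latent coordinates in $\setstyle{k}$, whereas you invoke involutivity of the pushed-forward coordinate distribution to identify $\fcontour{k}$ as the integral manifold --- both rest on the same tangent-space identification.
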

A compelling byproduct of \cref{theorem:principal manifolds flow} is that PFs can easily evaluate the probability density of their principal manifolds.  As a result, PFs can perform density estimation on manifolds at test time without making any assumptions about the dimensionality of the data manifold.  This is in stark contrast to existing flow based algorithms for density estimation on manifolds where the manifold dimensionality is fixed when the flow is created \citep{brehmer_flows_2020}.  In order to exploit this ability, we need a method to identify which contours correspond to which principal manifolds.   Recall that the principal components are ordered according to the eigenvalues of $JJ^T$.  Consider a PF where the partition size is 1.  Then the diagonal elements of $J^TJ$ will be equal to the top $\dim(\z)$ eigenvalues of $JJ^T$ because $J$ will be the product of a semi-orthogonal matrix and a diagonal matrix (see \cref{claim:4.5}), so $J^TJ$ will be diagonal and its diagonal elements of $J^TJ$ can be used to identify which contour corresponds to which principal manifold.  In the general case, $J^TJ$ is a block diagonal matrix so we can look at the square root of the determinant of each block matrix, $|\Jk{k}^T\Jk{k}|^\frac{1}{2}$.  $|\Jk{k}^T\Jk{k}|^{\frac{1}{2}}$ is how much the density around $\x$ is "stretched" along the contour $\fcontour{k}$ to form the structure present in the data, so contours with small values of $|\Jk{k}^T\Jk{k}|^{\frac{1}{2}}$ correspond to a direction that contributes little to the overall structure.

This check can be used filter out components of log likelihood that are due to small variations such as noise incurred in the data collection process.  For example, consider a PF trained on 2D data and at $\x$ we observe that $\log|J_1^TJ_1|\gg \log|J_2^TJ_2|$. Then the structure of the probability distribution at $\x$ should be primarily aligned with the contour $f_1(\z_1)$, so it might make sense to report the log likelihood of $\x$ on only this contour.  We define this procedure below:

\begin{definition}[Manifold corrected probability density]\label{def:manifold corrected density}
Let $\x$ be a sample from a PF.  The manifold corrected probability density of $\x$ is computed as
\begin{align}
    \log p_\mathcal{M}(\x) = \sum_{\setstyle{k} \in \mathcal{S}} \Lk{k}, \quad \mathcal{S} = \{\setstyle{k} : |\Jk{k}^T\Jk{k}|^{\frac{1}{2}} > \epsilon\}
\end{align}
\end{definition}
In experiment \cref{sec:manifold 3d} we demonstrate an example where PFs correctly learns the density of data generated on a variable dimension manifold.

\subsection{Learning algorithms}\label{subsection:learning algorithm}

\paragraph{PF objective}\mbox{} \\
The PF optimization problem is to minimize the negative log likelihood of data subject to the constraint that $\Ip = 0$:
\begin{align} \label{eq:PF problem} 
    \text{argmin}_\theta -\sum_{\x \in \mathcal{D}} \log p_\x(\x;\theta),\quad \text{s.t.}\quad \Ip(\x;\theta) = 0
\end{align}
We solve this problem with a regularized maximum likelihood objective:
\begin{align}\label{eq:PF lagrangian}
     &\mathrm{L}(\theta) = \sum_{\x \in \mathcal{D}} -\log p_\x(\x;\theta) + \alpha \Ip(\x;\theta) \nonumber \\
     &= \sum_{\z = g(\x),\x \in \mathcal{D}} -\log p_\z(\z) - \frac{\alpha - 1}{2}\log|J(\z;\theta)^TJ(\z;\theta)|\nonumber \\ 
     &\quad \quad \quad \quad + \frac{\alpha}{2}\sum_{\setstyle{k}\in \mathcal{P}}\log|\Jk{k}(\z;\theta)^T\Jk{k}(\z;\theta)| 
\end{align}
where $\alpha$ is a hyperparameter.  Note that in the special case where the partition $\setstyle{k}$ is 1 dimensional and $\dim(\x) = \dim(\z)$, this is the objective function used in \citep{gresele2021independent}.  Although $\mathrm{L}(\theta)$  is a valid objective, it requires that we compute $g(\x)$ and Jacobian-vector products with $f(\z)$, making it impractical for flows where only $g(\x)$ is easy to evaluate.  We remedy this issue by replacing the constraint $\Ip=0$ with $\Ihatp=0$ as per \cref{lemma:inverse I}.  The result is a novel loss function for training flows to have orthogonal contours:
\begin{align}\label{eq:PF objective}
    &\mathrm{L}_\text{PF}(\theta) = \sum_{\x \in \mathcal{D}}-\log p_\x(\x;\theta) - \alpha \Ihatp(\x;\theta) \nonumber \\
    &= \sum_{\x \in \mathcal{D}}-\log p_\z(g(\x;\theta)) - \frac{\alpha + 1}{2}\log|G(\x;\theta)G(\x;\theta)^T| \nonumber \\
    &\quad \quad \quad \quad + \frac{\alpha}{2}\sum_{\setstyle{k}\in \mathcal{P}}\log|\Gk{k}(\x;\theta)\Gk{k}(\x;\theta)^T|
\end{align}

$\mathrm{L}_\text{PF}(\theta)$ is the objective of choice when $\dim(x) = \dim(z)$.  It provides a lightweight change to maximum likelihood training that can be applied to any flow architecture.

\paragraph{iPF objective}\mbox{} \\
Next consider the case where $\dim(\x) > \dim(\z)$.  This appears in problems where we want to learn a low dimensional representation of data \citep{gemici_normalizing_2016,brehmer_flows_2020,caterini_rectangular_2021,pmlr-v108-kumar20a}.  Although we can optimize $\mathrm{L}(\theta)$ to learn a PF, naively optimizing \cref{eq:PF lagrangian} will require optimizing $\log|J^TJ|$, which requires $\dim(\z)$ Jacobian-vector products or an iterative algorithm \citep{caterini_rectangular_2021}.  We avoid this problem by setting $\alpha=1$ in \cref{eq:PF lagrangian}.  This yields the iPF objective:

\begin{align}\label{eq:iPF objective}
    \mathrm{L}_\text{iPF}(\theta) &= \sum_{\x \in \mathcal{D}} -\log p_\x(\x;\theta) + \Ip(\x;\theta) \nonumber \\
    &= \sum_{\z = g(\x),\x \in \mathcal{D}} -\log p_\z(\z) + \frac{1}{2}\sum_{\setstyle{k}\in \mathcal{P}}\log|\Jk{k}^T\Jk{k}|
\end{align}
The iPF objective is a novel lower bound on the log likelihood of a dataset that lies on a manifold.  Clearly the bound is tight when $\Ip(\x;\theta)=0$, so the learned model must trade off how close its contours are to principal manifolds with how well it represents data - both of which are desirable properties to have in a generative model.  The computational bottleneck of $\mathrm{L}_\text{iPF}(\theta)$ is the $\log|\Jk{k}^T\Jk{k}|$ terms, which each require $|\setstyle{k}|$  Jacobian-vector products to compute.  However, if $|\setstyle{k}| \ll \dim(\z)$, then $\mathrm{L}_\text{iPF}(\theta)$ is much more efficient to estimate than $\mathrm{L}(\theta)$ (see the next paragraph on unbiased estimates).  
\cref{eq:iPF objective} on its own cannot be used for training because there are no guarantees that training data will satisfy the condition $\x=f(\z)$.  Instead, we plug $\mathrm{L}_\text{iPF}(\theta)$ into the algorithm described in section 4 of \citep{caterini_rectangular_2021}.  This algorithm projects training data onto the generative manifold and maximizes the likelihood of the projected data, while also minimizing the reconstruction error.  See appendix \cref{appendix:mnist iPF} for a full description.

\paragraph{Unbiased estimates of the objectives}\label{sec:unbiased esimate}\mbox{} \\
In practice, we implement \cref{eq:PF objective} and \cref{eq:iPF objective} by randomly selecting $\setstyle{k}\in \mathcal{P}$, constructing $|\setstyle{k}|$ one-hot vectors where each vector has a single 1 at an index in $\setstyle{k}$, and evaluating each in a vector-Jacobian product (vjp) at $g(\x)$ or Jacobian-vector product (jvp) with $f(\z)$.  If each $\zk{k}$ is 1 dimensional, then the PF objective only requires a single vjp or jvp.  This means that the cost of training a PF is only slightly more expensive than training a regular normalizing flow and the cost of training an iPF is much more efficient than training an injective normalizing flow.  We provide Python code in appendix \cref{appendix:code}.



\begin{figure*}[th!]
    \centering
    \includegraphics[width=0.9\textwidth]{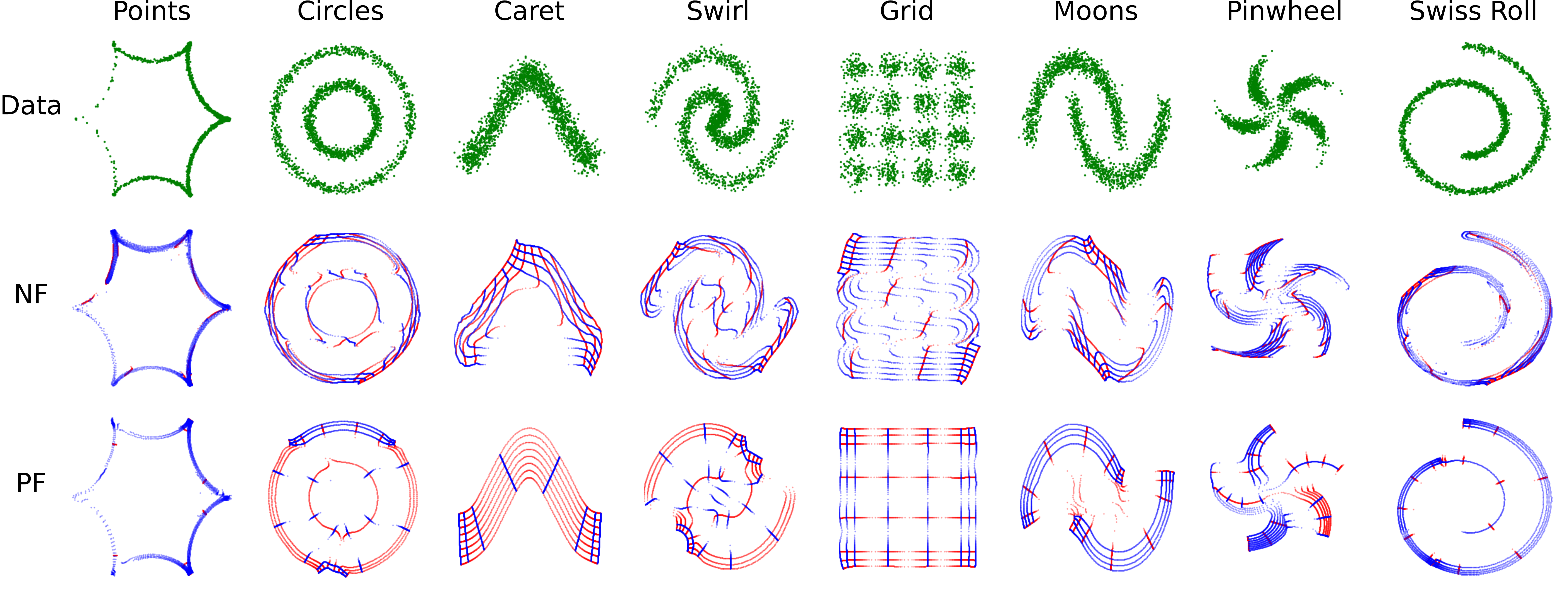}
    \caption{Contours for various synthetic datasets from a normalizing flow (NF) and principal manifold flow (PF).  Both flows learned to produce the correct samples (see \cref{appendix:2d experiments}) but only the PF learns the data's structure.}
    \label{fig:contour2d}
\end{figure*}
\begin{table*}[]
    \centering
\begin{tabular}{llrrrrrrrr}
  &    &  Points &  Circles &  Caret &  Swirl &  Grid &  Moons &  Pinwheel &  Swiss Roll \\
\midrule
$\log p(x) (\mathbf{\uparrow})$ & NF &   -1.60 &    -3.10 &  -1.89 &  -0.19 & -6.02 &  -0.64 &     -3.28 &       -4.67 \\
  & PF &   -1.62 &    -3.12 &  -1.89 &  -0.20 & -6.02 &  -0.66 &     -3.29 &       -4.68 \\
\midrule
$\Ip (\mathbf{\downarrow})$ & NF &    1.60 &     1.18 &   0.61 &   0.71 &  0.39 &   0.64 &      0.77 &        1.38 \\
  & PF &    0.00 &     0.00 &   0.00 &   0.00 &  0.00 &   0.00 &      0.00 &        0.00 \\
\end{tabular}
    \caption{Numerical results for learning synthetic datasets.  The PF obtains a similar test set log likelihood to that of the normalizing flow (NF), but only the PF has small pointwise mutual information ($\Ip$).  Small values of $\Ip$ result in the orthogonal contours shown in \cref{fig:contour2d}.}
    \label{tab:2d results}
\end{table*}

\section{Related Work}
Our work plugs a methodological gap in the normalizing flows \citep{papamakarios_normalizing_2019,pmlr-v37-rezende15} related to finding structure within flows.  Although this is not crucial for applications such as density estimation or Neural-transport MCMC \citep{neutra}, the success of approaches in other deep generative models for finding low dimensional structure such as the $\beta$-VAE \citep{higgins_beta-vae_2017,pmlr-v80-alemi18a} and Style GAN \citep{karras2019style} are motivation to find structure in normalizing flows.  
A subarea of flows research focuses on learning densities on manifolds.  \citep{gemici_normalizing_2016} introduced a proof of concept for learning a density over a specified manifold and since then other methods have extended the idea to other kinds of manifolds such as toris, spheres and  hyperbolic spaces \citep{rezende_normalizing_2020,bose2020latent}.  A related class of flows are dedicated to both learning manifolds and densities over them \citep{pmlr-v108-kumar20a,brehmer_flows_2020,kalatzis_multi-chart_2021,caterini_rectangular_2021,kothari_trumpets_2021}.  Our work is different because we focus on flows with density in the full data space and we do not focus on learning any single manifold.  Additionally, there has been work in flows aimed at constructing architectures so that structure can emerge during training \citep{zhang2021on,cunningham_normalizing_2020,pmlr-v130-cunningham21a}, however these methods have no guarantees that they will recover the intended structure whereas PFs do.

There are other works that impose orthogonality conditions on Jacobian matrices.  Conformal embedding flows \citep{ross2021conformal} constructs an injective flow that has an orthogonal times a scalar Jacobian matrix to learn densities over manifolds.  Our Jacobian structure is more flexible because it only requires $J^TJ$ to be block diagonal.  We also note that our method can be used to learn conformal mappings if the regularizer is used on the Jacobian and its transpose.  \citep{dombrowski2021diffeomorphic} presents a way to apply flows that have learned the structure of a dataset to generating counterfactuals by using optimization in the latent space, which is shown to adhere to the flow's generative manifold in the data space.  \citet{wei2021orojar} and \citet{gropp_isometric_2020} propose regularizers to ensure that the Jacobian matrix of their models are orthogonal.  As mentioned earlier, our Jacobian structure is much more flexible.
The most similar work to ours is independent mechanism analysis (IMA) \citep{gresele2021independent}. IMA is motivated by independent component analysis and causal inference while ours is motivated by uncovering the structure of data.  We introduce novel insights on the geometry of flows and the densities on their contours, orthogonality conditions for both injective flows and flows that are not easily invertible, and a test time algorithm for computing densities on manifolds.
PCA \citep{Jolliffe2011} and its nonlinear extensions \citep{Jolliffe2011,gorban_principal_2008} have the same goal as PFs of finding the principal structure of data.    \citet{cramer2021principal} treat PCA as a linear PF, but do not consider the nonlinear case. Work related to principal manifolds, such as locally linear embeddings \citep{ghojogh2021generative}, differ from ours primarily in that we use parametric functions to learn the geometry of data.

\section{Experiments}

\begin{figure*}
    \centering
    \includegraphics[width=0.8\linewidth]{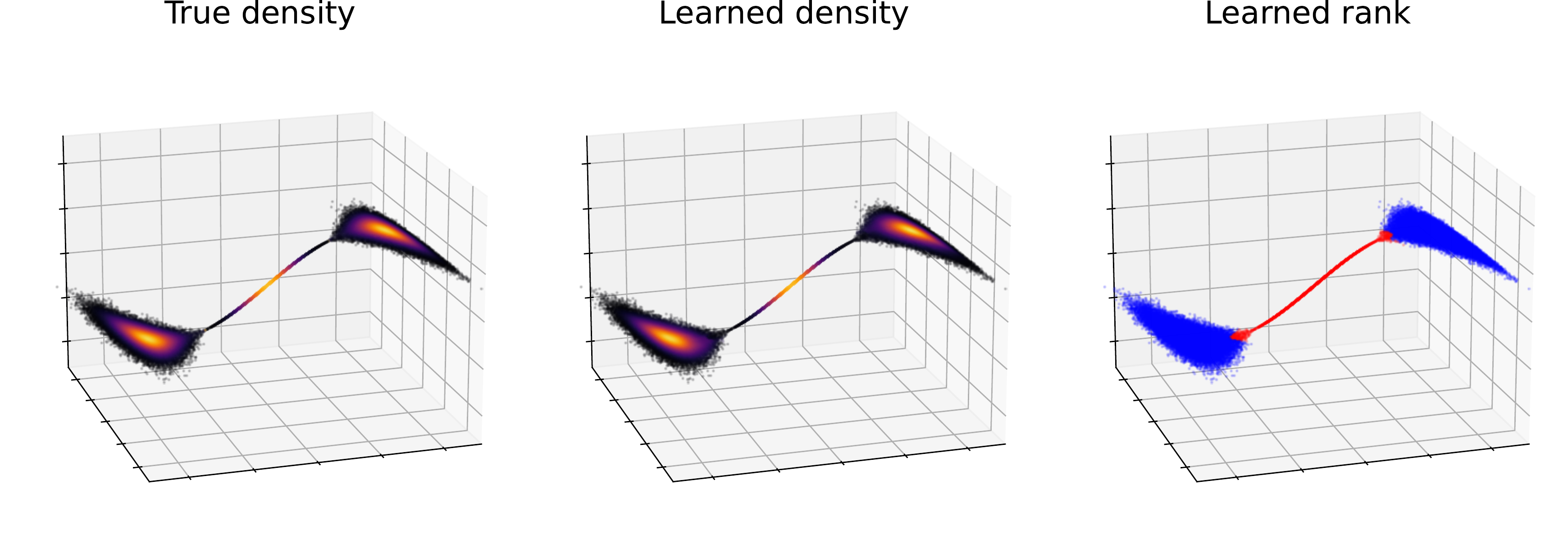}
    \caption{PFs are the only class of flows that can learn densities on manifolds with variable dimensionality.  The dataset is generated on a manifold that is 1D near the origin and 2D elsewhere.  At test time, the density for each data point is computed using \cref{def:manifold corrected density}.}
    \label{fig:manifold densities}
\end{figure*}

Our experiments showcase the capabilities of PFs to learn the principal manifolds of data, perform density estimation on data that is generated on a variable dimensional dataset, and learn high dimensional data embedded on a low dimensional manifold.  All of our experiments were written using the JAX \citep{jax2018github} Python library.  We provide extended results and details of our models in \cref{appendix:extended results}.

\subsection{2D Synthetic Datasets}
We trained standard normalizing flow and PF on various synthetic 2D datasets.  Both flows have an architecture with 10 coupling layers, each with a logistic mixture cdf with 8 components, logit and shift-scale transformer \citep{ho_flow_2019,papamakarios_normalizing_2019} and 5 layer residual network with 64 hidden units conditioner.  We applied a matrix vector product and act norm layer in between each coupling layer \citep{kingma_glow_2018}.  Note that logistic mixture cdfs require an iterative algorithm to invert.

The log likelihood and pointwise mutual information $(\Ip)$ of the test sets are shown in \cref{tab:2d results}.  We see from the likelihoods that the PF is able to learn the datasets as well as the standard flow while achieving a small value of $\Ip$.  The low $\Ip$ is reflected by the contours in \cref{fig:contour2d}.  In line with our theory, the contours of the PF are orthogonal to each other and are oriented in the directions of maximum variance.


\subsection{Learning manifold densities of varying rank}\label{sec:manifold 3d}

PFs have the unique ability to learn densities on manifolds with unknown rank.  All existing density estimation algorithms on manifolds using flows require specifying the dimensionality of the manifold beforehand, but PFs do not because they will automatically learn the underlying structure of the dataset.  The leftmost plot of \cref{fig:manifold densities} shows the target probability distribution whose samples lie on either a 1D or 2D manifold.  The data is generated by first sampling two univariate random variables $z_1$ and $z_2$ from a Gaussian mixture model and standard Gaussian respectively and then transforming $z=(z_1,z_2)$ to the data space with the equation $x=(z_1, z_2\text{max}(0,1-|\frac{1}{z_1}|), \sin(z_1))$.  Notice that $x$ is one dimensional when $|z_1| < 1$ and two dimensional otherwise.  During training we perturb the dataset with a small amount of Gaussian noise so that the training data has full rank.  See \cref{appendix:3d experiments} for a full description of the data and model and extended results.  We use the method described in \cref{def:manifold corrected density} to compute the rank and density of each data point in the test set.  We see from the center plot of \cref{fig:manifold densities} that the PF correctly recovers the densities of the test data samples.  The final forward KL divergence from the learned density and true density is 0.0146.  

\subsection{iPF}\label{section:iPF experiment}
Here we show that the iPF learning algorithm does in fact learn an injective flow with contours that are close to principal manifolds, and that the intuition about how contours relate to the principal manifolds does help explain the generative behavior of flows.  We trained an iPF and standard injective normalizing flow (iNF) on the MNIST dataset \citep{mnist}.  The iPF and iNF both had the same architecture consisting of 20 layers of GLOW \citep{kingma_glow_2018}, a slice layer that removes all but 10 of the latent dimensions (so that the latent space is 10 dimensional), and then another 10 layers of neural spline flows \citep{durkan_neural_2019}.  See appendix \cref{appendix:mnist iPF} for details on the model and the training.  Note that the iPF required roughly 10 times less resources to train because we computed a single jvp to estimate \cref{eq:iPF objective} while the iNF required 10 jvps to compute \cref{eq:change of variables general}.

\cref{fig:iPF contours} shows a similarity plot between sorted contours of each model and the true principal components.  The columns represent the principal components sorted by eigenvalue while the rows represent the tangents of the contours (columns of $J$) in increasing order of the diagonal of $J^TJ$.  The intensity of each cell is the average absolute value of the cosine similarity between $J$ and a principal component.  The plot of iPF is highlighted along the diagonal, which indicates that the contours are mostly aligned with the principal components whereas the plot for the iNF is highlighted along the last column, which indicates that the contours are mostly aligned with only the largest principal component.

\cref{fig:contour traversal 1} and \cref{fig:contour traversal 2} show a traversal of the largest and 5th largest contours of the iPF and iNF respectively.  We moved along the contours by computing the Jacobian matrix of the flow at the current $\z$, ordering the contours according to the diagonal of $J^TJ$, and then taking a step of $0.02$ on the dimension of $\z$ corresponding to the contour we want to traverse.  We took 500 of these steps and displayed every $50^\text{th}$ image in the figures.  The images generated on the top contours for both models are varied as expected.  The images on the 5th largest contours of the iPF are only varied slightly, which matches the results from \cref{fig:iPF contours} that the 5th largest contours will be oriented similarly to the 5th largest principal manifold and should therefore result in only a minimal amount of change.  The iNF, on the other hand, generates images on the 5th largest contour that are similar to those generated on the largest contour.  This also matches the intuition from \cref{fig:iPF contours} that the contours of the iNF are mostly aligned with the largest principal manifold.

\begin{figure}
\centering
\sbox{\measurebox}{%
  \begin{minipage}[b]{.45\textwidth}
  \subfloat
    []
    {\label{fig:iPF contours}\includegraphics[width=\textwidth]{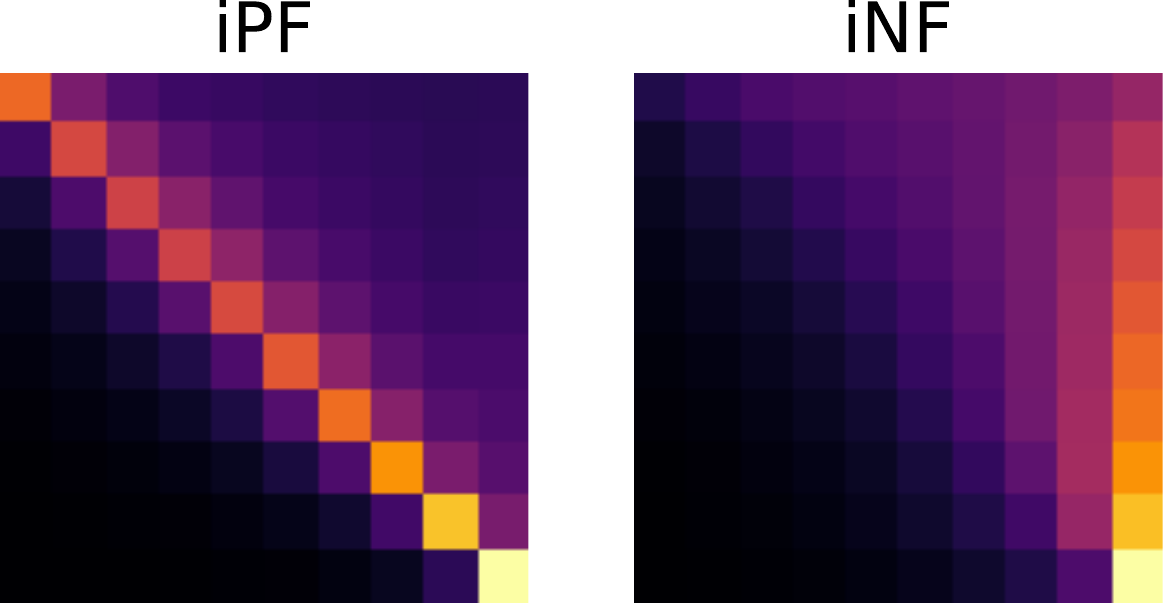}}
  \end{minipage}}
\usebox{\measurebox}\qquad
\begin{minipage}[b][\ht\measurebox][s]{.45\textwidth}\label{fig:contour traversal}
\centering
\subfloat
  []
  {\label{fig:contour traversal 1}\includegraphics[width=\textwidth]{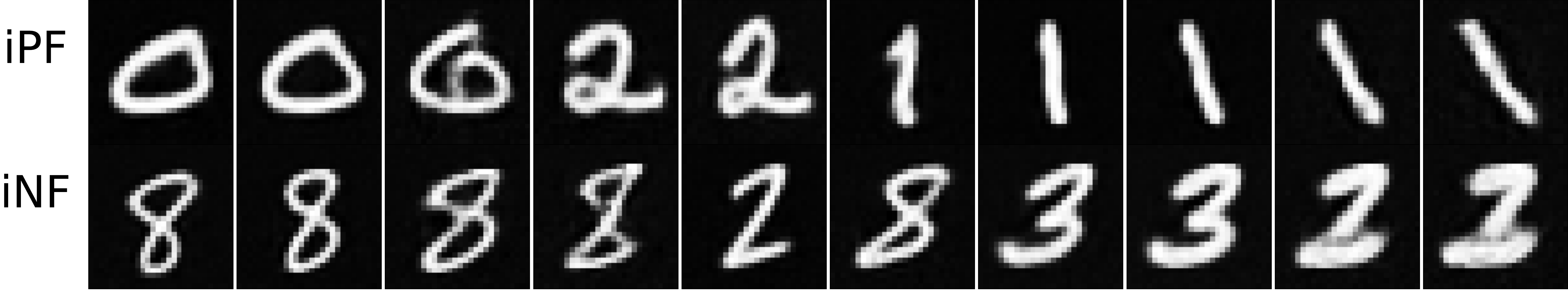}}

\vfill

\subfloat
  []
  {\label{fig:contour traversal 2}\includegraphics[width=\textwidth]{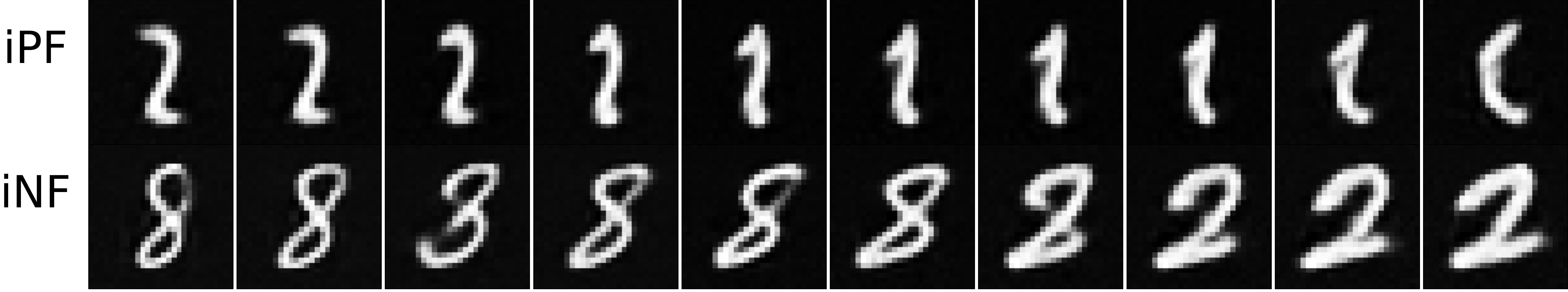}}
\end{minipage}
\caption{Similarity plot (\cref{fig:iPF contours}) between sorted contours of each model and the true principal components and traversal of the largest (\cref{fig:contour traversal 1}) and 5th largest (\cref{fig:contour traversal 2}) contours of the iPF and iNF.  See \cref{section:iPF experiment} for more details.}
\label{fig:iPF experiment}
\end{figure}



\section{Conclusion}
We introduced principal manifold flows, a type of normalizing flow whose latent variables generate its principal manifolds.  We investigated the generative behavior of flows using principal manifolds and contours to understand how a flow assign probability density to its samples.  This analysis helped us define PFs and develop an efficient general purpose learning algorithm.  Furthermore, we found an objective function to train injective PFs that avoided the need to compute a difficult Jacobian determinant during training.  We showed how to interpret the contours of PFs and proposed a simple test to match a contour with a principal manifold.  This test was then shown to help perform density estimation on the true data manifold at test time.  Our experiments demonstrated the PFs are effective tools for learning the principal manifolds of low dimensional data, or high dimensional data that is embedded on a low dimensional manifold, and that PFs are capable of performing density estimation on data that is generated on a variable dimensional manifold. 

\subsubsection*{Acknowledgments}
This material is based upon work supported by 
U.S. Army Research Laboratory Cooperative Research Agreement W911NF-17-2-0196,
U.S. National Science Foundation(NSF) grants \#1740079, and the United States Air Force and DARPA under Contract No. FA8750-20-C-0002.
The views, opinions
and/or findings expressed are those of the author(s) and should not be interpreted as representing
the official views or policies of the Department of Defense or the U.S. Government.

\clearpage

\bibliography{bibliography.bib}
\bibliographystyle{unsrtnat}

\appendix


\section{Python implementation}\label{appendix:code}
Below are Python implementations of the PF objective function.  The code uses the JAX \citep{jax2018github} Python library.

\begin{python}
import jax
import jax.numpy as jnp
import jax.scipy.stats.multivariate_normal as gaussian
import einops
from jax.random import randint

def PF_objective_brute_force(flow, x, P, alpha=5.0):
  """ Brute force implementation of the PF objective.
      Implemented for unbatched 1d inputs for simplicity

  Inputs:
    flow  - Function that accepts an unbatched 1d input
            and returns a 1d output and the log determinant
    x     - Unbatched 1d input
    P     - List of numpy arrays that form a partition
            over range(x.size)
    alpha - Regularization hyperparameter

  Outputs:
     objective - PFs objective
  """
  # Evaluate log p(x) with a Gaussian prior
  z, log_det = flow(x)
  log_pz = gaussian.logpdf(z, 0.0, 1.0).sum()
  log_px = log_pz + log_det

  # Create the Jacobian matrix for every item in the batch
  G = jax.jacobian(lambda x: flow(x)[0])(x)

  # Compute Ihat_P
  Ihat_P = -log_det
  for k in P:
    Gk = G[k,:]
    Ihat_P += 0.5*jnp.linalg.slogdet(Gk@Gk.T)[1]

  objective = -log_px + alpha*Ihat_P
  return objective.mean()
  
def PF_objective_unbiased(flow, x, rng_key, alpha=5.0):
  """ Unbiased estimate of the PF objective when the partition size is 1

  Inputs:
    flow    - Function that accepts an unbatched 1d input
              and returns a 1d output and the log determinant
    x       - Unbatched 1d input
    rng_key - JAX random key
    alpha   - Regularization hyperparameter

  Outputs:
     objective - PFs objective
  """
  # Evaluate log p(x) with a Gaussian prior and construct the vjp function
  z, vjp, log_det = jax.vjp(flow, x, has_aux=True)
  log_pz = gaussian.logpdf(z, 0.0, 1.0).sum()
  log_px = log_pz + log_det

  # Sample an index in the partition
  z_dim = z.shape[-1]
  k = random.randint(rng_key, minval=0, maxval=z_dim, shape=(1,))
  k_onehot = (jnp.arange(z_dim) == k).astype(z.dtype)
  
  # Evaluate the k'th row of G and compute an unbiased estimate of Ihat_P
  Gk, = vjp(k_onehot)
  GkGkT = (Gk**2).sum()
  Ihat_P = -log_det + z_dim*0.5*jnp.log(GkGkT)

  objective = -log_px + alpha*Ihat_P
  return objective.mean()

def iPF_objective_unbiased(flow, x, rng_key, gamma=10.0):
  """ Unbiased estimate of the iPF objective when the partition size is 1

  Inputs:
    flow    - Function that accepts an unbatched 1d input
              and returns a 1d output and the log determinant
    x       - Unbatched 1d input
    rng_key - JAX random key
    gamma   - Regularization hyperparameter

  Outputs:
     objective - iPFs objective
  """
  # Pass x through to the latent space and compute the prior
  z, _ = flow(x)
  log_pz = gaussian.logpdf(z, 0.0, 1.0).sum()
  
  # Sample an index in the partition
  z_dim = z.shape[-1]
  k = random.randint(rng_key, minval=0, maxval=z_dim, shape=(1,))
  k_onehot = (jnp.arange(z_dim) == k).astype(z.dtype)
  
  # Compute the reconstruction and k'th row of J
  x_reconstr, Jk = jax.jvp(lambda x: flow(x, inverse=True)[0], (z,), (k_onehot,))
  JkTJk = (Jk**2).sum()
  reconstruction_error = jnp.sum((x - x_reconstr)**2)
  
  # Compute the objective function
  objective = -log_pz + 0.5*jnp.log(JkTJk) + gamma*reconstruction_error
  return objective.mean()

def construct_partition_mask(index, z_shape):
  """ In general we can find the i'th row of a matrix A
      by computing A.T@mask where mask is zeros everywhere
      except at the i'th index where it is 1.

      This function finds all of the masks needed to find
      the rows in G that are in the index'th partition.

  Inputs:
    index   - Batched array of integers
    z_shape - Shape of the latent variable

  Outputs:
     masks - Array of 0s and 1s that will be used to
             find the rows of G within the index'th partition.
  """
  batch_size, H, W, C = z_shape
  n_partitions = C

  # The only non zero element of i'th row of
  # partition_mask is at the index[i]'th position
  # This is used to select a partition.
  # shape is (batch_size, C)
  partition_mask = jnp.arange(n_partitions) == index[:,None]

  # Create masks that will let us find the k'th rows of G using masked vjps.
  partition_size = H*W
  G_selection_mask = jnp.eye(partition_size)
  G_selection_mask = G_selection_mask.reshape((partition_size, H, W))

  # Put the masks together
  masks = jnp.einsum("bc,phw->pbhwc", partition_mask, G_selection_mask)
  return masks

def unbiased_objective_image(flow, x, rng_key, alpha=5.0, vectorized=True):
  """ PFs objective function for images.  Number of partitions is given
      by number of channels of output.

  Inputs:
    flow      - Function that accepts an batched 3d input
                and returns a batched 3d output and the log determinant
    x         - Batched 3d input with channel on last axis
    rng_key   - JAX random key
    alpha     - Regularization hyperparameter
    vectorize - Should all of the vjps be evaluated in parallel?

  Outputs:
     objective - PFs objective for images
  """
  # Assume that we partition over the last axis of z
  # and that x is a batched image with channel on the last axis
  batch_size, H, W, C = x.shape

  # Evaluate log p(x) and retrieve the function that
  # lets us evaluate vector-Jacobian products
  z, _vjp, log_det = jax.vjp(flow, x, has_aux=True)
  vjp = lambda v: _vjp(v)[0] # JAX convention to return a tuple
  log_pz = gaussian.logpdf(z, 0.0, 1.0).sum(axis=range(1, z.ndim))
  log_px = log_pz + log_det

  # Randomly sample the index of the partition we will evaluate
  n_partitions = z.shape[-1]
  index = randint(rng_key, minval=0, maxval=n_partitions, shape=(batch_size,))

  # Construct the masks that we'll use to find the index'th partition of G.
  # masks.shape == (partition_size, batch_size, H, W, C)
  masks = construct_partition_mask(index, z.shape)

  # Evaluate the vjp each of the n_partition masks
  if vectorized:
    # This is memory intensive but fast
    Gk = jax.vmap(vjp)(masks)
  else:
    # This is slow but memory efficient
    Gk = jax.lax.map(vjp, masks)

  # Each element of GG^T is the dot product between rows of G
  # Construct GkGk^T and then take its log determinant
  Gk = einops.rearrange(Gk, "p b H W C -> b p (H W C)")
  GkGkT = jnp.einsum("bij,bkj->bik", Gk, Gk)
  Ihat_P = 0.5*jnp.linalg.slogdet(GkGkT)[1]*n_partitions - log_det

  objective = -log_px + alpha*Ihat_P
  return objective.mean()
\end{python}

\newpage

\section{Contour Cookbook}\label{appendix:contour cookbook}
Below we list properties of contour densities and pointwise mutual information and their inverse variants.  Recall from our assumptions stated in the main text that a normalizing flow generates samples under the model $\z \sim p_\z(\z)=\prod_{\setstyle{k}\in \mathcal{P}}p_\smallsetstyle{k}(\z_\smallsetstyle{k}), \quad \x=f(\z)$ where $\dim(\x) \geq \dim(\z)$.  $f(\z)$ has the inverse $\z=f^{-1}(\x)=g(\x)$ and Jacobian matrix $J=\frac{df(\z)}{d\z}$ while the Jacobian matrix of the inverse function is $G=\frac{dg(\x)}{d\x}$.  $\Jk{k}$ is the matrix whose columns are the columns of $J$ with indices in $\setstyle{k}$ and $\Gk{k}$ is the matrix whose rows are the rows of $G$ with indices in $\setstyle{k}$.  

Below we assume that $\setstyle{s}$ and $\setstyle{t}$ are disjoint subsets of the integers in $[1,\dots,\dim(\z)]$ and that the indices of $\setstyle{s}$ and $\setstyle{t}$ are ordered so that results with block matrices can be presented clearly.  This is a valid assumption because the latent dimension can always be renumbered.

\subsection{Definitions}
\begin{enumerate}
    \item $\Lk{k} \overset{{\scriptstyle \Delta}}{=} \log p_{\setstyle{k}}(\zk{k}) -\frac{1}{2} \log|\Jk{k}^T \Jk{k}|$ \label{def:1}
    \item $\Lhatk{k} \overset{{\scriptstyle \Delta}}{=} \log p_{\setstyle{k}}(\zk{k}) + \frac{1}{2} \log|\Gk{k} \Gk{k}^T|$ \label{def:2}
    \item $\mathcal{L} \overset{{\scriptstyle \Delta}}{=} \log p_\x(\x) = \log p_\z(\z) - \frac{1}{2}|J^TJ|$ \label{def:3}
    \item $\widehat{\mathcal{L}} \overset{{\scriptstyle \Delta}}{=} \log p_\z(\z) + \frac{1}{2}|GG^T|$ \label{def:3.5}
    \item $\pmi{s}{t} \overset{{\scriptstyle \Delta}}{=} \Lk{s+t} - \Lk{s} - \Lk{t}$ \label{def:4}
    \item $\pmihat{s}{t} \overset{{\scriptstyle \Delta}}{=} \Lhatk{s+t} - \Lhatk{s} - \Lhatk{t}$ \label{def:5}
    \item $\Ip \overset{{\scriptstyle \Delta}}{=} \mathcal{L}  - \sum_{\setstyle{k}\in \mathcal{P}}\Lk{k}$ \label{def:6}
    \item $\Ihatp \overset{{\scriptstyle \Delta}}{=} \widehat{\mathcal{L}}  - \sum_{\setstyle{k}\in \mathcal{P}}\Lhatk{k}$ \label{def:7}
\end{enumerate}
\subsection{Claims}
\begin{enumerate}
    \item $\pmi{s}{t} = -\frac{1}{2}\log|\Jk{s+t}^T\Jk{s+t}| + \frac{1}{2}\log|\Jk{s}^T\Jk{s}| + \frac{1}{2}\log|\Jk{t}^T\Jk{t}|$ \label{claim:1}
    \item $\Ip = -\frac{1}{2}\log|J^TJ| + \frac{1}{2}\sum_{\setstyle{k}\in \mathcal{P}}\log|\Jk{k}^T\Jk{k}|$ \label{claim:1.5}
    \item $\pmi{s}{t} = -\frac{1}{2}\log|I - \Jk{s}^\parallel \Jk{t}^\parallel|$ where $A^\parallel = A(A^TA)^{-1}A^T$ denotes the projection matrix of $A$. \label{claim:2}
    \item $\pmi{s}{t} \geq 0$ \label{claim:3}
    \item $\Ip \geq 0$ \label{claim:3.5}
    \item $\pmi{s}{t} = 0 \text{ if and only if } \Jk{s+t} = \Ak{U}{s+t}^\parallel\Ak{\Sigma}{s+t}\begin{bmatrix}\Ak{V}{s}^T & 0 \\ 0 & \Ak{V}{t}^T\end{bmatrix}$ where $\Ak{U}{s+t}^\parallel$ is semi-orthogonal, $\Ak{V}{s}$ and $\Ak{V}{t}$ are orthogonal and $\Ak{\Sigma}{s+t}$ is diagonal. \label{claim:4}
    \item $\Ip = 0 \text{ if and only if }J = U^\parallel\Sigma\begin{bmatrix}\APk{V}{1}^T & 0 & 0 & 0 \\ 0 & \APk{V}{2}^T & 0 & 0 \\  0 & 0 & \ddots & \vdots \\  0 & 0 & \dots & \APk{V}{|\mathcal{P}|}^T\end{bmatrix}$ where $U^\parallel$ is a semi orthogonal matrix, $\Sigma$ is a diagonal matrix and each $\APk{V}{k}^T$ is an orthogonal matrix with same number of rows and columns as the $k'th$ element of $\mathcal{P}$. \label{claim:4.5}
    \item $\pmi{s}{t} = 0$ if and only if $\fcontour{s}$ and $\fcontour{t}$ intersect orthogonally. \label{claim:5}
    \item $\pmihat{s}{t} =  \frac{1}{2}\log|\Gk{s+t}\Gk{s+t}^T| - \frac{1}{2}\log|\Gk{s}\Gk{s}^T| - \frac{1}{2}\log|\Gk{t}\Gk{t}^T|$ \label{claim:6}
    \item $\Ihatp = \frac{1}{2}\log|GG^T| - \frac{1}{2}\sum_{\setstyle{k}\in \mathcal{P}}\log|\Gk{k}\Gk{k}^T|$ \label{claim:6.5}
    \item $\pmihat{s}{t} = \frac{1}{2}\log|I - \Gk{s}^\parallel \Gk{t}^\parallel|$ \label{claim:7}
    \item $\pmihat{s}{t} \leq 0$ \label{claim:8}
    \item $\Ihatp \leq 0$ \label{claim:8.5}
    \item $\pmihat{s}{t} = 0 \text{ if and only if } \Gk{s+t} = \begin{bmatrix}\Ak{V}{s} & 0 \\ 0 & \Ak{V}{t}\end{bmatrix}\Ak{\Sigma}{s+t}{\Ak{U}{s+t}^\parallel}^T$ where ${\Ak{U}{s+t}^\parallel}^T$ is semi-orthogonal, $\Ak{V}{s}$ and $\Ak{V}{t}$ are orthogonal and $\Ak{\Sigma}{s+t}$ is diagonal. \label{claim:9}
    \item $\Ihatp = 0 \text{ if and only if }J = \begin{bmatrix}\APk{V}{1} & 0 & 0 & 0 \\ 0 & \APk{V}{2} & 0 & 0 \\  0 & 0 & \ddots & \vdots \\  0 & 0 & \dots & \APk{V}{|\mathcal{P}|}\end{bmatrix}\Sigma{U^\parallel}^T$ where ${U^\parallel}^T$ is a semi orthogonal matrix, $\Sigma$ is a diagonal matrix and each $\APk{V}{k}$ is an orthogonal matrix with same number of rows and columns as the $k'th$ element of $\mathcal{P}$. \label{claim:10}    
    \item If $\dim(\x)=\dim(\z)$, then $\Ip = 0$ if and only if $\Ihatp = 0$ \label{claim:11}
\end{enumerate}

\subsection{Proofs}
\paragraph{Proof of claim \ref{claim:1}}\label{proof:1}
\textit{$\pmi{s}{t} = - \frac{1}{2}\log|\Jk{s+t}^T\Jk{s+t}| + \frac{1}{2}\log|\Jk{s}^T\Jk{s}| + \frac{1}{2}\log|\Jk{t}^T\Jk{t}|$}
\begin{proof}
\begin{align}
    \pmi{s}{t} &= \Lk{s+t} - \Lk{s} - \Lk{t} \\
    &= \underbrace{\log \frac{p_{\setstyle{s}+\setstyle{t}}(\zk{s+t})}{p_{\setstyle{s}}(\zk{s})p_{\setstyle{t}}(\zk{t})}}_{=0\text{ by assumption of how prior factors}} - \frac{1}{2}\log|\Jk{s+t}^T\Jk{s+t}| + \frac{1}{2}\log|\Jk{s}^T\Jk{s}| + \frac{1}{2}\log|\Jk{t}^T\Jk{t}|\\
    &=  - \frac{1}{2}\log|\Jk{s+t}^T\Jk{s+t}| + \frac{1}{2}\log|\Jk{s}^T\Jk{s}| + \frac{1}{2}\log|\Jk{t}^T\Jk{t}|
\end{align}
\end{proof}

\paragraph{Proof of claim \ref{claim:1.5}}\label{proof:1.5}
\textit{$\Ip = -\frac{1}{2}\log|J^TJ| + \frac{1}{2}\sum_{\setstyle{k}\in \mathcal{P}}\log|\Jk{k}^T\Jk{k}|$}
\begin{proof}
\begin{align}
    \Ip &= \mathcal{L} - \sum_{\setstyle{k}\in \mathcal{P}}\Lk{k} \\
    &= \underbrace{\log \frac{p_\z(\z)}{\prod_{\setstyle{k}\in \mathcal{P}}p_\smallsetstyle{k}(\z_\smallsetstyle{k})}}_{=0\text{ by assumption of how prior factors}} - \frac{1}{2}\log|J^TJ| + \frac{1}{2}\sum_{\setstyle{k}\in \mathcal{P}}\log|\Jk{k}^T\Jk{k}|\\
    &= -\frac{1}{2}\log|J^TJ| + \frac{1}{2}\sum_{\setstyle{k}\in \mathcal{P}}\log|\Jk{k}^T\Jk{k}|
\end{align}
\end{proof}

\paragraph{Proof of claim \ref{claim:2}}\label{proof:2}
\textit{$\pmi{s}{t} = -\frac{1}{2}\log|I - \Jk{s}^\parallel \Jk{t}^\parallel|$ where $A^\parallel = A(A^TA)^{-1}A^T$ denotes the projection matrix of $A$.}
\begin{proof}
\begin{align}
    |\Jk{s+t}^T\Jk{s+t}| &= |\begin{bmatrix}\Jk{s}^T \\ \Jk{t}^T\end{bmatrix}\begin{bmatrix}\Jk{s} & \Jk{t}\end{bmatrix}| \\
    &= |\begin{bmatrix}\Jk{s}^T\Jk{s} & \Jk{s}^T\Jk{t} \\ \Jk{t}^T\Jk{s} & \Jk{t}^T\Jk{t}\end{bmatrix}| \\
    &= |\Jk{s}^T\Jk{s}||\Jk{t}^T\Jk{t} - \Jk{t}^T\underbrace{\Jk{s}(\Jk{s}^T\Jk{s})^{-1}\Jk{s}^T}_{\Jk{s}^\parallel}\Jk{t}| \label{eq:placeholder} \\
    &= |\Jk{s}^T\Jk{s}||\Jk{t}^T\Jk{t}||I - \Jk{s}^\parallel \underbrace{\Jk{t}(\Jk{t}^T\Jk{t})^{-1}\Jk{t}^T}_{\Jk{t}^\parallel}| \\
    &= |\Jk{s}^T\Jk{s}||\Jk{t}^T\Jk{t}||I - \Jk{s}^\parallel\Jk{t}^\parallel|
\end{align}
Therefore $\frac{1}{2}\log|\Jk{s}^T\Jk{s}| + \frac{1}{2}\log|\Jk{t}^T\Jk{t}| - \frac{1}{2}\log|\Jk{s+t}^T\Jk{s+t}| = -\frac{1}{2}\log|I - \Jk{s}^\parallel\Jk{t}^\parallel|$.  An application of claim \ref{claim:1} completes the proof that $\pmi{s}{t} = -\frac{1}{2}\log|I - \Jk{s}^\parallel \Jk{t}^\parallel|$.
\end{proof}

\paragraph{Proof of claim \ref{claim:3}}\label{proof:3}
\textit{$\pmi{s}{t} \geq 0$}
\begin{proof}
First we will show that $I - \Jk{s}^\parallel \Jk{t}^\parallel$ is a positive semi-definite matrix.  Let $x$ be some vector.
\begin{align}
    x^T(I - \Jk{s}^\parallel \Jk{t}^\parallel)x &= x^Tx - x^T\Jk{s}^\parallel \Jk{t}^\parallel x \\
    &= |x|_2^2(1 - \underbrace{\frac{x}{|x|_2}}_{\hat{x}}\Jk{s}^\parallel \Jk{t}^\parallel\frac{x}{|x|_2})
\end{align}
$\Jk{s}^\parallel$ and $\Jk{t}^\parallel$ are orthogonal projection matrices, so their operator norm is less than or equal to 1.  By definition of the operator norm, we have that $||\Jk{s}^\parallel \hat{x}||_{\text{op}} \leq 1$ and $||\Jk{t}^\parallel \hat{x}||_{\text{op}} \leq 1$.  It follows that $\hat{x}\Jk{s}^\parallel \Jk{t}^\parallel\hat{x} \leq ||\Jk{s}^\parallel \hat{x}||_{\text{op}}||\Jk{t}^\parallel \hat{x}||_{\text{op}} \leq 1$.  So
\begin{align}
    |x|_2^2(1 - \hat{x}\Jk{s}^\parallel \Jk{t}^\parallel\hat{x}) \geq |x|_2 \geq 0
\end{align}
It is known that if $A$ is positive semi-definite, then $\log |A| \leq \Tr(A - I)$.  We can now apply this bound to $I - \Jk{s}^\parallel \Jk{t}^\parallel$:
\begin{align}
    -\frac{1}{2}\log|I - \Jk{s}^\parallel \Jk{t}^\parallel| &\geq -\frac{1}{2}\Tr(I - \Jk{s}^\parallel \Jk{t}^\parallel - I) \\
    &= \frac{1}{2}\Tr(\Jk{s}^\parallel \Jk{t}^\parallel) \\
    &\geq 0 \text{ because the trace of a positive semi-definite matrix is non negative.}
\end{align}
This proves that $\pmi{s}{t}\geq 0$.
\end{proof}

\paragraph{Proof of claim \ref{claim:3.5}}\label{proof:3.5}
\textit{$\Ip \geq 0$}
\begin{proof}
As per \cref{eq:generalized decomposition}, $\Ip$ can be written as the sum of various $\pmi{s}{t}$ terms, each of which are non-negative by claim \ref{claim:3}.  Therefore $\Ip \geq 0$. 
\end{proof}

\paragraph{Proof of claim \ref{claim:4}}\label{proof:4}
\textit{$\pmi{s}{t} = 0 \text{ if and only if } \Jk{s+t} = \Ak{U}{s+t}^\parallel\Ak{\Sigma}{s+t}\begin{bmatrix}\Ak{V}{s}^T & 0 \\ 0 & \Ak{V}{t}^T\end{bmatrix}$ where $\Ak{U}{s+t}^\parallel$ is semi-orthogonal, $\Ak{V}{s}$ and $\Ak{V}{t}$ are orthogonal and $\Ak{\Sigma}{s+t}$ is diagonal.}
\begin{proof}
Let $A$ be a tall matrix with full rank.  Its singular value decomposition can be written as:
\begin{align}
    A &= U\begin{bmatrix}\Sigma \\ 0\end{bmatrix}V^T \\
    &= \begin{bmatrix}U^\parallel & U^\perp\end{bmatrix}\begin{bmatrix}\Sigma \\ 0\end{bmatrix}V^T \\
    &= U^\parallel \Sigma V^T
\end{align}
$U^\parallel$ is an orthonormal basis for the image of $A$ and $U^\perp$ is an orthonormal basis for the orthogonal complement of the image.  We can write the SVD of $\Jk{s}$ and $\Jk{t}$ as well:
\begin{align}
    \Jk{s} &= \Ak{U}{s}^\parallel \Ak{\Sigma}{s} \Ak{V}{s}^T \\
    \Jk{t} &= \Ak{U}{t}^\parallel \Ak{\Sigma}{t} \Ak{V}{t}^T
\end{align}

Assume that $\pmi{s}{t}=0$.  We must have that $\Jk{s}^T\Jk{t}=0$ because if $\pmi{s}{t} = -\frac{1}{2}\log|I - \Jk{s}^\parallel \Jk{t}^\parallel|=0$, then it must be the case that $\Jk{s}^\parallel \Jk{t}^\parallel=0$, so the images of $\Jk{s}$ and $\Jk{t}$ must be orthogonal.  This means that $\Ak{U}{s}^\parallel$ and $\Ak{U}{t}^\parallel$ are mutually orthogonal because these matrices form orthonormal bases for the images of $\Jk{s}$ and $\Jk{t}$, so their columns form an orthonormal basis for $\Jk{s+t}$.  Next we can write out $\Jk{s+t}$:
\begin{align}
    \Jk{s+t} &= \begin{bmatrix}\Jk{s} & \Jk{t}\end{bmatrix} \label{eq:jacobian decomposition} \\
    &= \begin{bmatrix}\Ak{U}{s}^\parallel \Ak{\Sigma}{s} \Ak{V}{s}^T & \Ak{U}{t}^\parallel \Ak{\Sigma}{t} \Ak{V}{t}^T\end{bmatrix} \\
    &= \underbrace{\begin{bmatrix}\Ak{U}{s}^\parallel & \Ak{U}{t}^\parallel\end{bmatrix}}_{\Ak{U}{s+t}^\parallel}\underbrace{\begin{bmatrix}\Ak{\Sigma}{s} & 0 \\ 0 & \Ak{\Sigma}{t}\end{bmatrix}}_{\Ak{\Sigma}{s+t}}\begin{bmatrix}\Ak{V}{s}^T & 0 \\ 0 & \Ak{V}{t}^T\end{bmatrix} \\
    &= \Ak{U}{s+t}^\parallel\Ak{\Sigma}{s+t}\begin{bmatrix}\Ak{V}{s}^T & 0 \\ 0 & \Ak{V}{t}^T\end{bmatrix}
\end{align}
$\Ak{U}{s+t}^\parallel$ is a semi-orthogonal matrix because all of its columns form an orthonormal basis.

Next assume that $\Jk{s+t}=U\Ak{\Sigma}{s+t}\begin{bmatrix}\Ak{V}{s}^T & 0 \\ 0 & \Ak{V}{s}^T\end{bmatrix}$ where $U$ is semi-orthogonal, $\Ak{\Sigma}{s+t}$ is diagonal and $\Ak{V}{s}$ and $\Ak{V}{t}$ are orthogonal.
\begin{align}
    \Jk{s+t} &= U\Ak{\Sigma}{s+t}\begin{bmatrix}\Ak{V}{s}^T & 0 \\ 0 & \Ak{V}{s}^T\end{bmatrix} \\
    &= \begin{bmatrix}U^\parallel & U^\perp\end{bmatrix}\begin{bmatrix}\Ak{\Sigma}{s} & 0 \\ 0 & \Ak{\Sigma}{t}\end{bmatrix}\begin{bmatrix}\Ak{V}{s}^T & 0 \\ 0 & \Ak{V}{s}^T\end{bmatrix} \\
    &= \begin{bmatrix}U^\parallel \Ak{\Sigma}{s} \Ak{V}{s}^T & U^\perp \Ak{\Sigma}{t} \Ak{V}{t}^T \end{bmatrix} \\
    &= \begin{bmatrix}\Jk{s} & \Jk{t}\end{bmatrix}
\end{align}
$U^\parallel \Ak{\Sigma}{s} \Ak{V}{s}^T$ and $U^\perp \Ak{\Sigma}{t} \Ak{V}{t}^T$ are the SVD of $\Jk{s}$ and $\Jk{t}$ respectively, so $\Jk{s}^\parallel=U^\parallel {U^\parallel}^T$ and $\Jk{t}^\parallel=U^\perp {U^\perp}^T$. Plugging this into claim \ref{claim:2} yields the result $\pmi{s}{t}=0$ because ${U^\parallel}^T U^\perp=0$.
\end{proof}

\paragraph{Proof of claim \ref{claim:4.5}}\label{proof:4.5}
\textit{$\Ip = 0 \text{ if and only if }J = U^\parallel\Sigma\begin{bmatrix}\APk{V}{1}^T & 0 & 0 & 0 \\ 0 & \APk{V}{2}^T & 0 & 0 \\  0 & 0 & \ddots & \vdots \\  0 & 0 & \dots & \APk{V}{|\mathcal{P}|}^T\end{bmatrix}$ where $U^\parallel$ is a semi orthogonal matrix, $\Sigma$ is a diagonal matrix and each $\APk{V}{k}^T$ is an orthogonal matrix with same number of rows and columns as the $k'th$ element of $\mathcal{P}$.}
\begin{proof}
Let $\mathcal{P}'$ be a partition over $[1,\dots,\dim(\z)]$ with $k<|\mathcal{P}|$ elements where the first $k-1$ elements of $\mathcal{P}$ and $\mathcal{P}'$ are identical and the $k$'th element of $\mathcal{P}'$ is the union of the final $|\mathcal{P}|-k$ elements of $\mathcal{P}$.  We will use $\mathcal{P}_k$ to denote the $k$'th element of $\mathcal{P}$, $\mathcal{P}_{:k}$ to denote the union of the first $k$ elements of $\mathcal{P}$ and $\mathcal{P}_{k:}$ to denote the union of the $k$'th to last elements of $\mathcal{P}$.   We will use a proof by induction to prove one direction of the claim where we assume that $\Ip=0$.

The base case is when $\mathcal{P}'$ contains only $\mathcal{P}_{1}$ and $\mathcal{P}_{2:}$.  From \cref{subsection:change of variables decomp} we know that we can construct the partition using a tree that has a parent node equal to $\mathcal{P}'$ with children that are $\mathcal{P}_{1}$ and $\mathcal{P}_{2:}$.  This means $\Ip$ will be the sum of $\mathcal{I}_{1,2:}$ and other $\mathcal{I}$ terms and because we assumed that $\Ip=0$, it must be that $\mathcal{I}_{1,2:}=0$, so we can apply claim \ref{claim:4} to satisfy the inductive hypothesis.

Next assume $\mathcal{P}'$ contains the first $k-1$ elements of $\mathcal{P}$ and an element containing the union of the remainder of $\mathcal{P}$.  Assuming that the inductive hypothesis is true, we can write the Jacobian matrix as
\begin{align}
    J &= U^\parallel\Sigma\begin{bmatrix}\APk{V}{:k-1}^T & 0 \\  0 & \APk{V}{k:}^T\end{bmatrix} \quad \text{where} \\
    \APk{V}{:k-1}^T &= \begin{bmatrix}\APk{V}{1}^T & 0 & 0 & 0 \\ 0 & \APk{V}{2}^T & 0 & 0 \\  0 & 0 & \ddots & \vdots \\  0 & 0 & \dots & \APk{V}{k-1}^T\end{bmatrix}
\end{align}

We can rewrite $J$ to isolate the columns in the $\mathcal{P}_k$ partition:
\begin{align}
    J &= U^\parallel\Sigma\begin{bmatrix}\APk{V}{:k-1}^T & 0 \\  0 & \APk{V}{k:}^T\end{bmatrix} \label{eq:62} \\
    &= \begin{bmatrix}\APk{U}{:k-1}^\parallel & \APk{U}{k:}^\parallel\end{bmatrix}\begin{bmatrix}\APk{\Sigma}{:k-1} & 0 \\ 0 & \APk{\Sigma}{k:}\end{bmatrix}\begin{bmatrix}\APk{V}{:k-1}^T & 0 \\  0 & \APk{V}{k:}^T\end{bmatrix} \\
    &= \begin{bmatrix}\APk{U}{:k-1}^\parallel\APk{\Sigma}{:k-1}\APk{V}{:k-1}^T & \APk{U}{k:}^\parallel\APk{\Sigma}{k:}\APk{V}{k:}^T\end{bmatrix}
\end{align}
Next, let $\APk{J}{k:} = \APk{U}{k:}^\parallel\APk{\Sigma}{k:}\APk{V}{k:}^T$.  $\APk{J}{k:}$ contains the columns of $J$ with indices in the final $|\mathcal{P}-k|$ elements of $\mathcal{P}$.  Choose a partition from these final elements, $\mathcal{P}_{k}$.   Let $\APk{J}{k}$ contain the columns of $\APk{J}{k:}$ that are in $\mathcal{P}_{k}$ and let $\APk{J}{k+1:}$ contain the remaining columns.  Because $\mathcal{P}_{k}\in \mathcal{P}$, it must be true that $\mathcal{I}_{\mathcal{P}_{k},\mathcal{P}_{k+1:}}=0$.  Therefore we can apply claim \ref{claim:4} to decompose $\APk{J}{k}$
\begin{align}
    \APk{J}{k:} &= \APk{U}{k:}^\parallel\APk{\Sigma}{k:}\APk{V}{k:}^T \\
    &= \APk{U}{k:}^\parallel\APk{\Sigma}{k:}\begin{bmatrix}\APk{V}{k}^T & 0 \\ 0 & \APk{V}{k+1:}^T\end{bmatrix}
\end{align}
Plugging this back into Eq.\ref{eq:62} and yields
\begin{align}
    J &= U^\parallel\Sigma\begin{bmatrix}\APk{V}{:k-1}^T & 0 \\  0 & \APk{V}{k:}^T\end{bmatrix} \\
    &= U^\parallel\Sigma\begin{bmatrix}\APk{V}{:k-1}^T & 0 & 0 \\ 0 & \APk{V}{k}^T & 0 \\ 0 & 0 & \APk{V}{k+1:}^T\end{bmatrix} \\
    &= U^\parallel\Sigma\begin{bmatrix}\APk{V}{:k}^T & 0 \\  0 & \APk{V}{k+1:}^T\end{bmatrix}
\end{align}
So by induction, $J = U^\parallel\Sigma\begin{bmatrix}\APk{V}{1}^T & 0 & 0 & 0 \\ 0 & \APk{V}{2}^T & 0 & 0 \\  0 & 0 & \ddots & \vdots \\  0 & 0 & \dots & \APk{V}{|\mathcal{P}|}^T\end{bmatrix}$.

For the other direction, assume that $J = U^\parallel\Sigma\begin{bmatrix}\APk{V}{1}^T & 0 & 0 & 0 \\ 0 & \APk{V}{2}^T & 0 & 0 \\  0 & 0 & \ddots & \vdots \\  0 & 0 & \dots & \APk{V}{|\mathcal{P}|}^T\end{bmatrix}$.  Clearly $J^TJ$ will be a block diagonal matrix, so $\log|J^TJ| = \sum_{\setstyle{k}\in \mathcal{P}}\log|\Jk{k}^T\Jk{k}|$.  It trivially follows from claim \ref{claim:1.5} that $\Ip=0$.
\end{proof}

\paragraph{Proof of claim \ref{claim:5}}\label{proof:5}
\textit{$\pmi{s}{t} = 0$ if and only if $\fcontour{s}$ and $\fcontour{t}$ intersect orthogonally.}
\begin{proof}
We saw in the proof of claim \ref{claim:4} that the image of $\Jk{s}$ and $\Jk{t}$ are orthogonal when $\pmi{s}{t} = 0$.  At the point that $\fcontour{s}$ and $\fcontour{t}$ intersect, they are aligned with the images of $\Jk{s}$ and $\Jk{t}$ respectively, so they will intersect orthogonally.  Similarly,if $\fcontour{s}$ and $\fcontour{t}$ intersect orthogonally, their definition tells us that the image of $\Jk{s}$ and $\Jk{t}$ are orthogonal, so we must have $\pmi{s}{t}=0$.
\end{proof}

\paragraph{Proof of claim \ref{claim:6}}\label{proof:6}
\textit{$\pmihat{s}{t} = \frac{1}{2}\log|\Gk{s+t}\Gk{s+t}^T| - \frac{1}{2}\log|\Gk{s}\Gk{s}^T| - \frac{1}{2}\log|\Gk{t}\Gk{t}^T|$}
\begin{proof}
\begin{align}
    \pmihat{s}{t} &= \Lhatk{s+t} - \Lhatk{s} - \Lhatk{t} \\
    &= \underbrace{\log \frac{p_{\setstyle{s}+\setstyle{t}}(\zk{s+t})}{p_{\setstyle{s}}(\zk{s})p_{\setstyle{t}}(\zk{t})}}_{=0\text{ by assumption of how prior factors}} + \frac{1}{2}\log|\Gk{s+t}\Gk{s+t}^T| - \frac{1}{2}\log|\Gk{s}\Gk{s}^T| - \frac{1}{2}\log|\Gk{t}\Gk{t}^T|\\
    &= \frac{1}{2}\log|\Gk{s+t}\Gk{s+t}^T| - \frac{1}{2}\log|\Gk{s}\Gk{s}^T| - \frac{1}{2}\log|\Gk{t}\Gk{t}^T|
\end{align}
\end{proof}

\paragraph{Proof of claim \ref{claim:6.5}}\label{proof:6.5}
\textit{$\Ihatp = -\frac{1}{2}\log|GG^T| + \frac{1}{2}\sum_{\setstyle{k}\in \mathcal{P}}\log|\Gk{k}\Gk{k}^T|$}
\begin{proof}
\begin{align}
    \Ihatp &= \widehat{\mathcal{L}} - \sum_{\setstyle{k}\in \mathcal{P}}\Lhatk{k} \\
    &= \underbrace{\log \frac{p_\z(\z)}{\prod_{\setstyle{k}\in \mathcal{P}}p_\smallsetstyle{k}(\z_\smallsetstyle{k})}}_{=0\text{ by assumption of how prior factors}} + \frac{1}{2}\log|GG^T| - \frac{1}{2}\sum_{\setstyle{k}\in \mathcal{P}}\log|\Gk{k}\Gk{k}^T|\\
    &= \frac{1}{2}\log|GG^T| - \frac{1}{2}\sum_{\setstyle{k}\in \mathcal{P}}\log|\Gk{k}\Gk{k}^T|
\end{align}
\end{proof}

\paragraph{Proof of claim \ref{claim:7}}\label{proof:7}
\textit{$\pmihat{s}{t} = \frac{1}{2}\log|I - \Gk{s}^\parallel \Gk{t}^\parallel|$.}
\begin{proof}
\begin{align}
    |\Gk{s+t}\Gk{s+t}^T| &= |\begin{bmatrix}\Gk{s} \\ \Gk{t}\end{bmatrix}\begin{bmatrix}\Gk{s}^T  & \Gk{t}^T\end{bmatrix}| \\
    &= |\begin{bmatrix}\Gk{s}\Gk{s}^T & \Gk{s}\Gk{t}^T  \\ \Gk{t}\Gk{s}^T  & \Gk{t}\Gk{t}^T\end{bmatrix}| \\
    &= |\Gk{s}\Gk{s}^T||\Gk{t}\Gk{t}^T - \Gk{t}\underbrace{\Gk{s}^T(\Gk{s}\Gk{s}^T)^{-1}\Gk{s}}_{\Gk{s}^\parallel}\Gk{t}^T|  \\
    &= |\Gk{s}\Gk{s}^T||\Gk{t}\Gk{t}^T||I - \Gk{s}^\parallel \underbrace{\Gk{t}^T(\Gk{t}\Gk{t}^T)^{-1}\Gk{t}}_{\Gk{t}^\parallel}| \\
    &= |\Gk{s}\Gk{s}^T||\Gk{t}\Gk{t}^T||I - \Gk{s}^\parallel\Gk{t}^\parallel|    
\end{align}
Therefore $\frac{1}{2}\log|\Gk{s+t}\Gk{s+t}^T| - \frac{1}{2}\log|\Gk{s}\Gk{s}^T| - \frac{1}{2}\log|\Gk{t}\Gk{t}^T| = \frac{1}{2}\log|I - \Gk{s}^\parallel\Gk{t}^\parallel|$.  An application of claim \ref{claim:6} completes the proof that $\pmihat{s}{t} = \frac{1}{2}\log|I - \Gk{s}^\parallel \Gk{t}^\parallel|$.
\end{proof}

\paragraph{Proof of claim \ref{claim:8}}\label{proof:8}
\textit{$\pmihat{s}{t} \leq 0$}
\begin{proof}
Notice that we can prove that $-\frac{1}{2}\log|I - \Gk{s}^\parallel \Gk{t}^\parallel| \geq 0$ using an identical proof as the one used to prove \ref{claim:3}.  Therefore it must be that $\pmihat{s}{t}=\frac{1}{2}\log|I - \Gk{s}^\parallel \Gk{t}^\parallel| \leq 0$.
\end{proof}

\paragraph{Proof of claim \ref{claim:8.5}}\label{proof:8.5}
\textit{$\Ihatp \leq 0$}
\begin{proof}
The same steps used in \cref{eq:generalized decomposition} to write $\Ip$ as the sum of various $\pmi{s}{t}$ terms can be used to write $\Ihatp$ as the sum of various $\pmihat{s}{t}$ terms.   Since each $\pmihat{s}{t} \leq 0$, it must be that $\Ihatp \leq 0$.
\end{proof}

\paragraph{Proof of claim \ref{claim:9}}\label{proof:9}
\textit{$\pmihat{s}{t} = 0 \text{ if and only if } \Gk{s+t} = \begin{bmatrix}\Ak{V}{s} & 0 \\ 0 & \Ak{V}{t}\end{bmatrix}\Ak{\Sigma}{s+t}{\Ak{U}{s+t}^\parallel}^T$ where ${\Ak{U}{s+t}^\parallel}^T$ is semi-orthogonal, $\Ak{V}{s}$ and $\Ak{V}{t}$ are orthogonal and $\Ak{\Sigma}{s+t}$ is diagonal.}
\begin{proof}
The proof is identical to that of \ref{claim:4} except that the matrices are transposed.
\end{proof}

\paragraph{Proof of claim \ref{claim:10}}\label{proof:10}
\textit{$\Ihatp = 0 \text{ if and only if }J = \begin{bmatrix}\APk{V}{1} & 0 & 0 & 0 \\ 0 & \APk{V}{2} & 0 & 0 \\  0 & 0 & \ddots & \vdots \\  0 & 0 & \dots & \APk{V}{|\mathcal{P}|}\end{bmatrix}\Sigma{U^\parallel}^T$ where ${U^\parallel}^T$ is a semi orthogonal matrix, $\Sigma$ is a diagonal matrix and each $\APk{V}{k}$ is an orthogonal matrix with same number of rows and columns as the $k'th$ element of $\mathcal{P}$.}
\begin{proof}
The proof is identical to that of \ref{claim:4.5} except that the matrices are transposed.
\end{proof}

\paragraph{Proof of claim \ref{claim:11}}\label{proof:11}
\textit{If $\dim(\x)=\dim(\z)$, then $\Ip = 0$ if and only if $\Ihatp = 0$}
\begin{proof}
First assume that $\Ip=0$.  Then by \ref{claim:4}, $J=U\Sigma V^T$ where $U$ is orthogonal, $\Sigma$ is diagonal and $V^T$ is a block diagonal matrix with orthogonal blocks.  Because $\dim(\x)=\dim(\z)$, $G=J^{-1}=V\Sigma U^T$.  Then by \ref{claim:9}, $\Ihatp=0$.  The reverse clause is proven in the same manner.  Assuming $\Ihatp=0$, we can use \ref{claim:9} to decompose $G$, take its inverse and use \ref{claim:4} to prove $\Ip=0$.
\end{proof}

\section{PF Proofs}

\begin{lemma}\label{lemma:principal components flow}
Each contour of a PF at $\x$ is spanned by a unique set of principal components.
\end{lemma}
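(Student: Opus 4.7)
The plan is to apply Claim \ref{claim:4.5} from the Contour Cookbook directly. Since a PF satisfies $\Ip=0$ by definition, Claim \ref{claim:4.5} gives the decomposition
\begin{align}
    J = U^\parallel \Sigma \begin{bmatrix} V_{\mathcal{P}_1}^T & 0 & \cdots & 0 \\ 0 & V_{\mathcal{P}_2}^T & \cdots & 0 \\ \vdots & & \ddots & \vdots \\ 0 & 0 & \cdots & V_{\mathcal{P}_{|\mathcal{P}|}}^T \end{bmatrix}
\end{align}
with $U^\parallel$ semi-orthogonal, $\Sigma$ diagonal, and each $V_{\mathcal{P}_k}$ orthogonal of size matching $|\mathcal{P}_k|$.

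From this I would compute $JJ^T = U^\parallel \Sigma^2 (U^\parallel)^T$ directly, which is already the eigendecomposition of $JJ^T$ up to ordering; hence by Definition \ref{definition:principal components of a flow} the principal components of the flow at $\x$ are exactly the columns of $U^\parallel$ (ordered by the diagonal of $\Sigma^2$). Next, for each partition element $\setstyle{k}\in\mathcal{P}$, the contour Jacobian $\Jk{k}$ is the submatrix of columns of $J$ with indices in $\setstyle{k}$. The block-diagonal structure on the right selects a corresponding block $U^\parallel_{\setstyle{k}}$ of columns of $U^\parallel$ together with the corresponding diagonal block $\Sigma_{\setstyle{k}}$, so that $\Jk{k} = U^\parallel_{\setstyle{k}}\, \Sigma_{\setstyle{k}}\, V_{\setstyle{k}}^T$. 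Since $V_{\setstyle{k}}$ is orthogonal, the column space of $\Jk{k}$ equals the column space of $U^\parallel_{\setstyle{k}}$, i.e. the span of a specific subset of principal components.

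Finally, because the blocks in the right-hand matrix above correspond to the disjoint elements of the partition $\mathcal{P}$, the subsets $\{U^\parallel_{\setstyle{k}}\}_{\setstyle{k}\in\mathcal{P}}$ partition the columns of $U^\parallel$ into disjoint groups. Thus each contour at $\x$ is spanned by a unique (disjoint) set of principal components, as claimed. The one subtlety to flag is that the decomposition in Claim \ref{claim:4.5} is not unique when $\Sigma$ has repeated singular values, so strictly speaking the principal components are only unique up to orthogonal mixing within each eigenspace of $JJ^T$; I would note that the statement should be read as uniqueness at the level of sets of principal directions, and that the block structure guarantees that contours never share any principal direction. This case analysis for repeated eigenvalues is the only nontrivial obstacle, and it is resolved by passing to the invariant subspaces.
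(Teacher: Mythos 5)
Your proof is correct and follows essentially the same route as the paper's: invoke Claim \ref{claim:4.5}, read off the eigendecomposition $JJ^T = U^\parallel \Sigma^2 {U^\parallel}^T$, and use the block structure to identify each $\Jk{k}$ with a disjoint subset of columns of $U^\parallel$. Your additional caveat about non-uniqueness under repeated singular values is a reasonable refinement the paper does not spell out, but it does not change the argument.
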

\begin{proof}
The principal components of a flow at $\x$ are the eigenvectors of $JJ^T$.  From claim \ref{claim:4.5} we know that $J=U^\parallel\Sigma\begin{bmatrix}\APk{V}{1}^T & 0 & 0 & 0 \\ 0 & \APk{V}{2}^T & 0 & 0 \\  0 & 0 & \ddots & \vdots \\  0 & 0 & \dots & \APk{V}{|\mathcal{P}|}^T\end{bmatrix}$ where $\mathcal{P}_{k}$ is the $k'th$ element of the partition of the latent space.  The $U^\parallel$ and $\Sigma$ matrices form an eigendecomposition of $JJ^T$ because $JJ^T=U^\parallel\Sigma^2 {U^\parallel}^T$, so the columns of $U^\parallel$ are the principal components of the PF.  Next, we can rewrite $J$ so that the dependence of each contour on the principal components is explicit:
\begin{align}
    J &= \begin{bmatrix}\APk{U^\parallel}{1}\APk{\Sigma}{1}\APk{V}{1}^T & \APk{U^\parallel}{2}\APk{\Sigma}{2}\APk{V}{2}^T & \dots & \APk{U^\parallel}{|\mathcal{P}|}\APk{\Sigma}{|\mathcal{P}|}\APk{V}{|\mathcal{P}|}^T\end{bmatrix}
\end{align}
$\APk{U^\parallel}{k}\APk{\Sigma}{k}\APk{V}{k}^T$ is the $k$'th contour of the PF.  We can clearly see that its image is spanned by the principal components with indices in $\mathcal{P}_k$.  Because $\mathcal{P}_i\bigcap \mathcal{P}_j = \emptyset, \forall i,j$ we conclude that each contour of a PF at $\x$ is spanned by a unique set of principal components.
\end{proof}

\begin{theorem}\label{appendix:principal manifolds flow theorem}
The contours of a principal manifold flow are principal manifolds.
\end{theorem}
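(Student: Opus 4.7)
The plan is to build on the preceding Lemma (principal components of a flow at the PF) together with Claim~\ref{claim:4.5}, and show that contours and principal manifolds share the same tangent distribution at every point, so that the contour traces out a principal manifold as a geometric object.

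\textbf{Step 1: Characterize the Jacobian of a PF pointwise.} Apply Claim~\ref{claim:4.5} to any sample $\x = f(\z)$. Since a PF satisfies $\Ip = 0$ at every sample, we get the decomposition
\begin{equation*}
    J = U^\parallel \Sigma \,\mathrm{blockdiag}(\APk{V}{1}^T,\ldots,\APk{V}{|\mathcal{P}|}^T),
\end{equation*}
with $U^\parallel$ semi-orthogonal, $\Sigma$ diagonal, and each $\APk{V}{k}$ orthogonal. Partition $U^\parallel$ and $\Sigma$ accordingly into column/diagonal blocks $\APk{U^\parallel}{k}$ and $\APk{\Sigma}{k}$, so that $\Jk{k} = \APk{U^\parallel}{k}\APk{\Sigma}{k}\APk{V}{k}^T$.

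\textbf{Step 2: Identify principal components and tangent spaces.} From the eigendecomposition $JJ^T = U^\parallel \Sigma^2 {U^\parallel}^T$, the columns of $U^\parallel$ are exactly the principal components of the flow at $\x$ (Definition~\ref{definition:principal components of a flow}) and the entries of $\Sigma^2$ are the corresponding eigenvalues. Because $\APk{V}{k}$ is orthogonal, the image of $\Jk{k}$ equals the column span of $\APk{U^\parallel}{k}$, which is exactly the span of the principal components indexed by $\mathcal{P}_k$. By definition of contours, the tangent space to $\fcontour{k}$ at $\x$ is the image of $\Jk{k}$, so at every point the contour $\fcontour{k}$ is tangent to precisely the span of $\{\hat{\rv{w}}_i : i \in \mathcal{P}_k\}$.

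\textbf{Step 3: Match tangent distributions with the principal manifold ODE.} The principal manifold in Definition~\ref{definition:principal manifold of a flow} is defined as the integral submanifold of the vector fields $\wk{k}(\x(t)) = \hat{\rv{w}}_{\smallsetstyle{k}}\sqrt{\Lambda_{\smallsetstyle{k}}}$, i.e.\ a manifold whose tangent space at every point is the span of the scaled principal components in $\mathcal{P}_k$. Scaling by $\sqrt{\Lambda_k}$ only changes the parameterization, not the tangent distribution, so a principal manifold through $\x_0$ is geometrically the unique integral submanifold of the distribution $\mathrm{span}\{\hat{\rv{w}}_i : i \in \mathcal{P}_k\}$ through $\x_0$ (assuming it is integrable). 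Step~2 shows that the contour $\fcontour{k}$ is such an integral submanifold: it has the prescribed tangent distribution everywhere and is by construction a smooth embedded manifold through $\x_0 = f(\z)$. Uniqueness of integral submanifolds of an integrable distribution then forces the contour and the principal manifold to coincide as geometric manifolds through the common basepoint.

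\textbf{Main obstacle and how to handle it.} The delicate point is reconciling the two parameterizations: contours are parameterized by $\zk{k}$ via $f$, while principal manifolds are parameterized by the flow of scaled principal components $\hat{\rv{w}}_{\smallsetstyle{k}}\sqrt{\Lambda_{\smallsetstyle{k}}}$. The theorem must therefore be read as a statement about the underlying manifolds, not their parameterizations. Step~2 guarantees integrability of the principal-component distribution along PF contours (because an explicit integral manifold, namely the contour, exists), which sidesteps having to verify Frobenius' condition separately; once integrability is in hand, the uniqueness argument in Step~3 finishes the proof.
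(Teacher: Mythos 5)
Your proposal is correct at the level of rigor of the paper, and its first two steps are exactly the paper's route: the paper proves a standalone lemma (each contour of a PF is spanned by a unique set of principal components) by invoking Claim~\ref{claim:4.5}, reading off $JJ^T = U^\parallel \Sigma^2 {U^\parallel}^T$ as an eigendecomposition, and writing $\Jk{k} = \APk{U^\parallel}{k}\APk{\Sigma}{k}\APk{V}{k}^T$ so that the contour's tangent space is the span of the principal components indexed by $\mathcal{P}_k$ --- which is precisely your Steps~1 and~2. Where you diverge is the finishing move. The paper stays concrete: it substitutes $\x(t)=f(\z(t))$ into the defining ODE of the principal manifold and computes
\begin{align}
    \frac{d\z(t)}{dt} = J^+\APk{U^\parallel}{k}\APk{\Sigma}{k} = \begin{bmatrix}0 & \dots & \APk{V}{k}\APk{\Sigma}{k}\APk{V}{k}^T & \dots & 0\end{bmatrix}^T,
\end{align}
so the pulled-back trajectory moves only the coordinates in $\setstyle{k}$, i.e.\ the principal manifold is traced out inside the contour --- a purely linear-algebraic argument using the pseudoinverse, with no appeal to integrability. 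You instead argue at the level of tangent distributions and conclude via uniqueness of integral submanifolds, explicitly flagging that the contour itself witnesses integrability so Frobenius' condition need not be checked separately. Your version buys a cleaner treatment of the ``manifolds, not parameterizations'' reading and makes explicit an integrability caveat the paper silently skips (its Definition of a principal manifold as a multi-parameter ODE solution implicitly needs it); the paper's version buys elementarity and an explicit description of what the principal-manifold flow looks like in latent coordinates. One point worth tightening in your write-up: the eigenvector fields $\hat{\rv{w}}_i$ (and the grouping by $\mathcal{P}_k$) must be assumed smooth and consistently ordered along the contour for the ``distribution'' you quotient over to be well defined, and strictly both arguments only show containment of the integral object in the contour unless dimensions are matched --- but the paper's own proof is no more careful on either count, so this is not a gap relative to the statement as the authors prove it.
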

\begin{proof}
The principal manifold of a flow is found by integrating along the direction of a principal component.
\begin{align}
    \frac{d\x(t)}{dt} &= \wk{k}(\x(t))
\end{align}
\cref{lemma:principal components flow} tells us that the contours of a PF are locally spanned by the principal components and that the eigenvectors and eigenvalues of $JJ^T$ are equal to $U^\parallel$ and $\Sigma^2$ respectively.  So we can simplify by letting $\x(t)=f(\z(t))$.
\begin{align}
    \frac{d\x(t)}{dt} &= \wk{k}(\x(t)) \\
    J\frac{d\z(t)}{dt} &= \APk{U^\parallel}{k}\APk{\Sigma}{k} \\
    \frac{d\z(t)}{dt} &= J^+\APk{U^\parallel}{k}\APk{\Sigma}{k} \\
                     &= \begin{bmatrix}0 & \dots & \APk{V}{k}\APk{\Sigma}{k}\APk{V}{k}^T & \dots & 0\end{bmatrix}^T
\end{align}
This derivation tells us that the principal manifold, when traced out in the latent space, is equal to a manifold that only varies along dimensions in $\setstyle{k}$.  This is exactly how contours are generated, therefore the principal manifolds of a PF are its contours.
\end{proof}

\newpage
\section{Additional details on experiments}\label{appendix:extended results}

\subsection{2D experiments}\label{appendix:2d experiments}

\begin{figure}
    \centering
    \includegraphics[width=\textwidth]{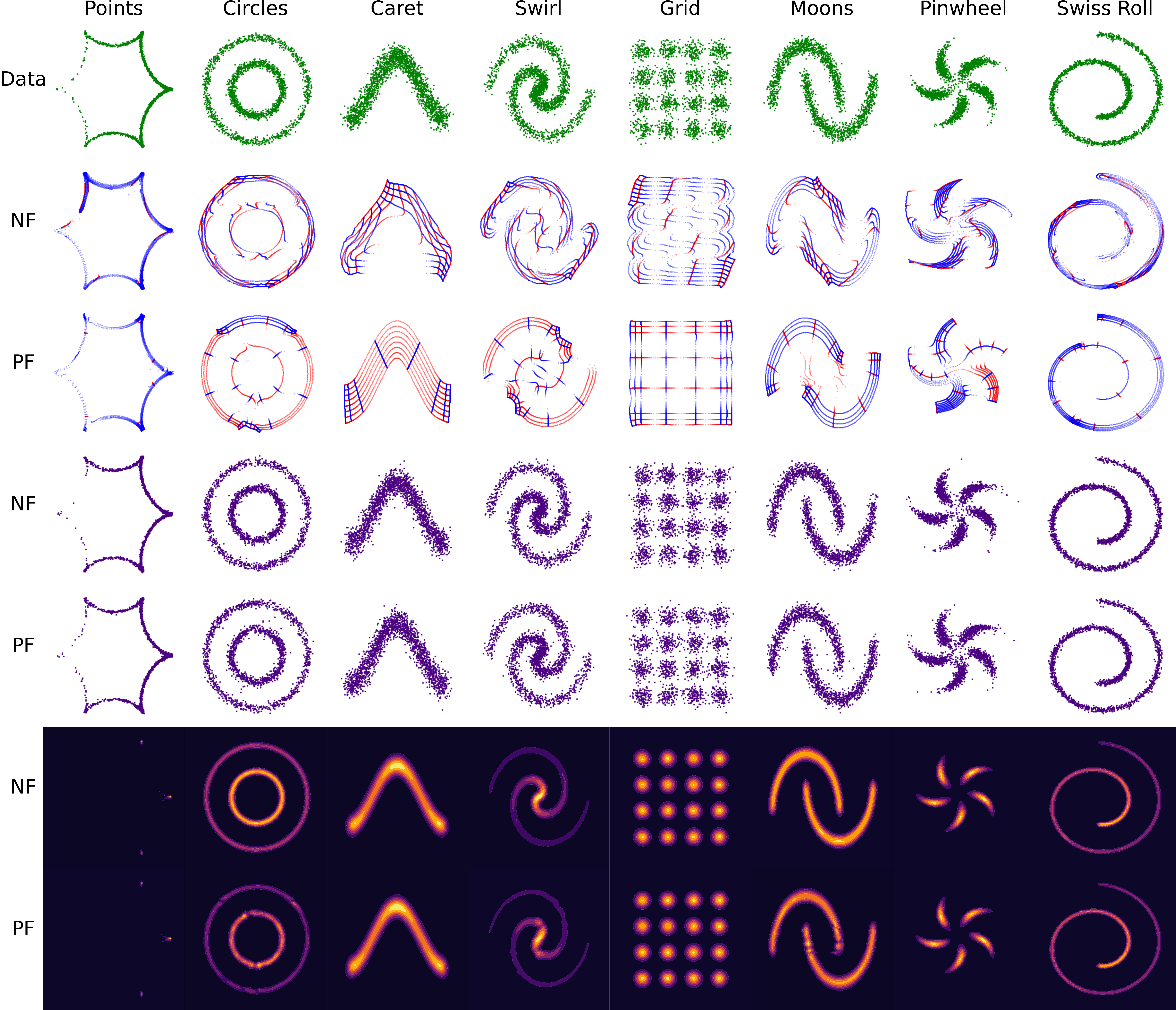}
    \caption{Extended results for synthetic datasets.  Top row has true samples, the next two rows are contours, the next two are samples and the last two are probability densities computed by the models.}
    \label{fig:extended contour2d}
\end{figure}

See Fig.\ref{fig:extended contour2d} for extended results.  Each dataset was generated with 1,000,000 data points and split into 700,000 for training and 300,000 for testing.  As mentioned in the main text, the architecture used on all of the datasets consisted of 10 coupling layers with logistic mixture cdf layers that used 8 mixture components and an affine coupling layer that shared the same conditioner network.  Each conditioner consisted of 5 residual layers with a hidden layer size of 64.  The models were trained using the AdaBelief \citep{zhuang2020adabelief} optimization algorithm with a learning rate of $1\times 10^{-3}$ and a batch size of 256, and $\alpha=10.0$.  Each model was trained for approximately 4 hours on either a NVIDIA 1080ti or 2080ti.

\subsection{Variable dimension manifold}\label{appendix:3d experiments}
We used a flow with 20 coupling based neural spline \citep{durkan_neural_2019} layers, each with 8 knot points, followed directly by an affine coupling layer that is parametrized by the same conditioner network as done in \citep{ho_flow_2019}.  The conditioner networks all consisted of a 5 layer residual network with a hidden dimension of 32.  We used a unit Gaussian prior.  The model was trained for around 4 hours on a NVIDIA 3090 gpu with a learning rate of $1\times 10^{-4}$ with the AdaBelief \citep{zhuang2020adabelief} optimization algorithm and a batch size of 2048 and $\alpha=5.0$.  We trained on 2,100,000 data points and evaluated on 900,000.  As stated in the main text, the data was augmented with Gaussian noise with a standard deviation of 0.01 to ensure that the model did not collapse during training.  The true generative model for the data is:
\begin{align}
    z_1 &\sim \frac{1}{3}(N(-2, 0.3) + N(0, 0.3) + N(2, 0.3)) \\
    z_2 &\sim N(0,1) \\
    x &= f(z_1,z_2) = \begin{bmatrix}z_1 \\ z_2\text{max}(0,1-|\frac{1}{z_1}|) \\ \sin(z_1) \end{bmatrix}
\end{align}
The true density was computed in a piecewise manner.  If $x_1 = 0$, then
\begin{align}
    p(x) = p(z_1)|\frac{df(z)}{dz_1}^T\frac{df(z)}{dz_1}|^{\frac{-1}{2}}
\end{align}
Otherwise, 
\begin{align}
    p(x) = p(z_1)p(z_2)|\frac{df(z)}{dz}^T\frac{df(z)}{dz}|^{\frac{-1}{2}}
\end{align}
The Jacobian determinants were computed with automatic differentiation.

In \cref{fig:large manifold densities} we show a larger version of \cref{fig:manifold densities}, in \cref{fig:manifold samples} we showcase samples pulled from the PF and the contours learned by the model.  In \cref{fig:manifold error histogram} we see that the PF does extremely well in predicting the log likelihood of the test set.  The final KL divergence between the true data distribution and learned was 0.0146.  To choose the rank at test time, we compared the three contour likelihoods provided by the model and filtered out the likelihoods that were negligible compared to the others.

\begin{figure}
    \centering
    \includegraphics[width=\textwidth]{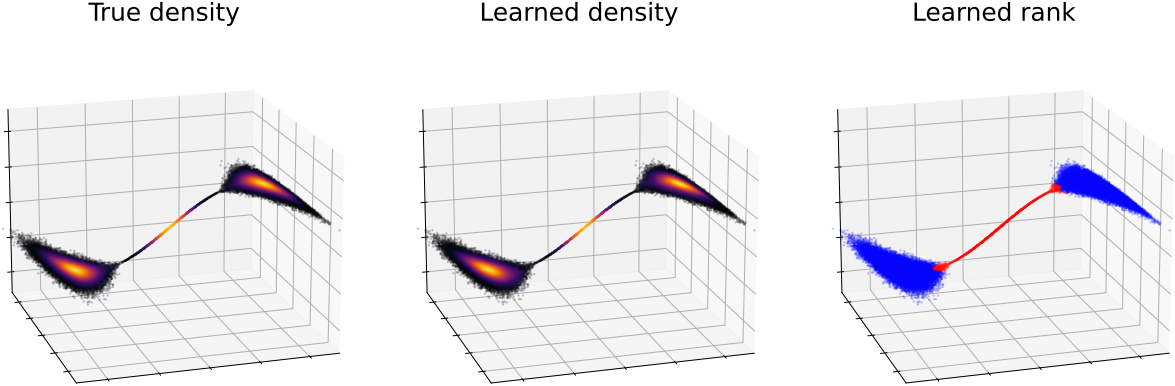}
    \caption{A larger version of \cref{fig:manifold densities}}
    \label{fig:large manifold densities}
\end{figure}

\begin{figure}
    \centering
    \includegraphics[width=\textwidth]{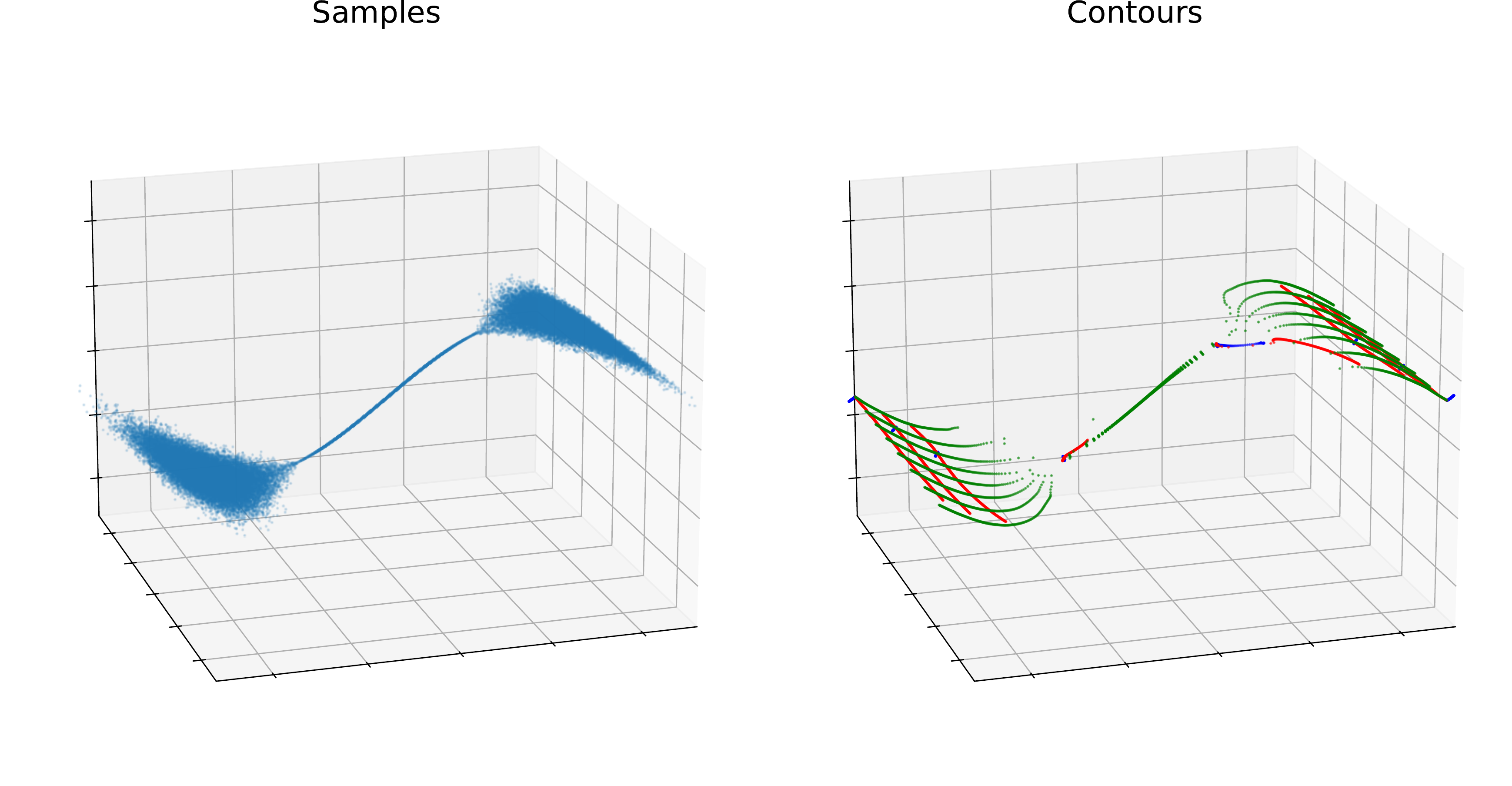}
    \caption{Samples and contours from the model trained for \cref{sec:manifold 3d}}
    \label{fig:manifold samples}
\end{figure}

\begin{figure}
    \centering
    \includegraphics[width=0.5\textwidth]{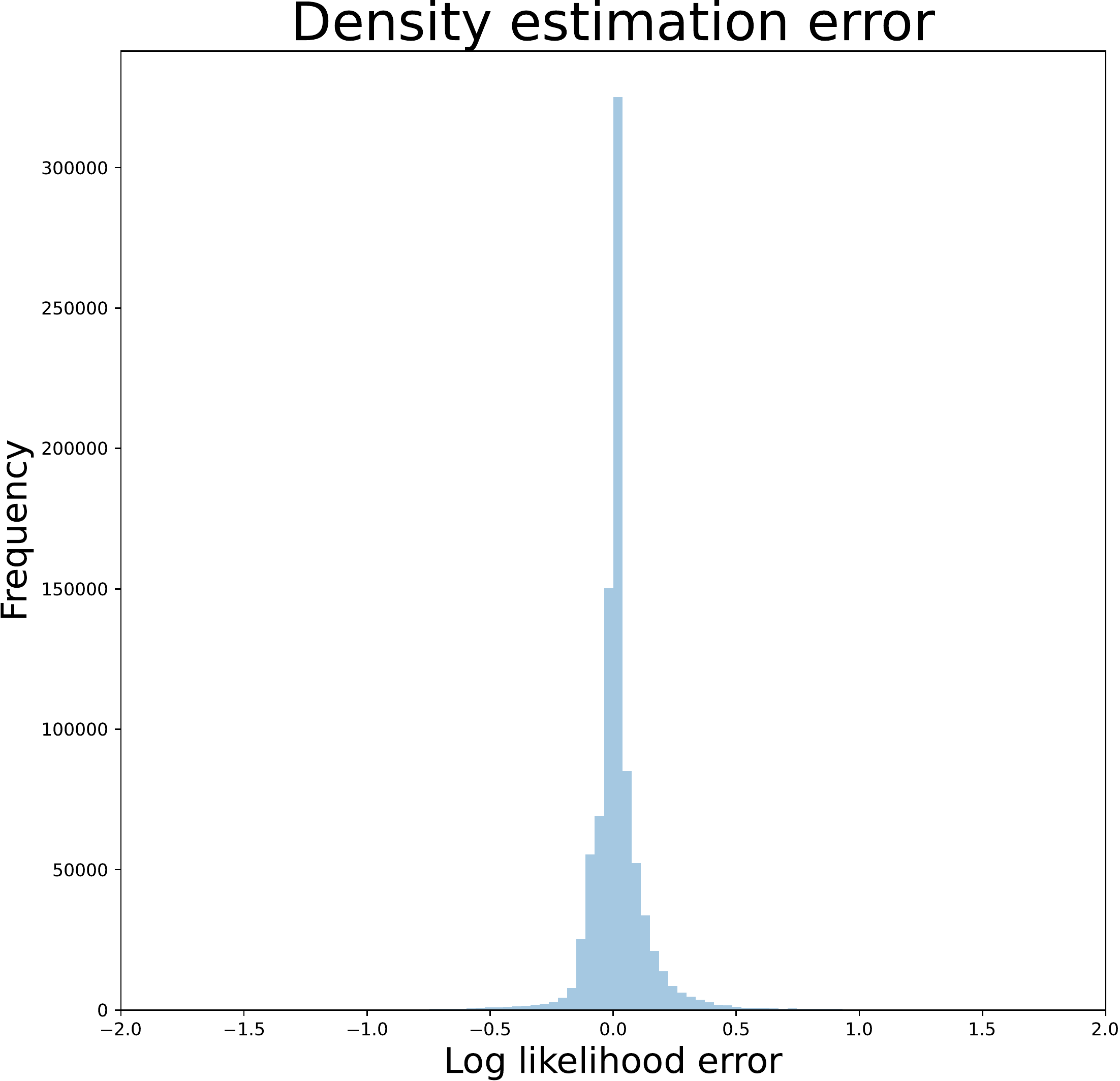}
    \caption{Histogram of the difference between the true log likelihood and the predicted for the experiment in \cref{sec:manifold 3d}}
    \label{fig:manifold error histogram}
\end{figure}

\subsection{iPF}\label{appendix:mnist iPF}

\paragraph{Preprocessing}
For training, we preprocessed incoming batches of data using uniform dequantization and a scaling layer + logit transformation as described in \citep{dinh_density_2017}.  Then each $(28\times 28\times 1)$ image was flattened into a 784 dimensional vector.

\paragraph{Model architectures}
The iPF and iNF architectures were composed of two parts.  The first is a flow in the full 784 dimensional ambient space that consisted of 20 layers of GLOW \citep{kingma_glow_2018} with each conditioner network consisting of 3 residual networks with a hidden dimension of 32 and dropout rate of 0.2.  After the GLOW layers, the output was sliced so that the resulting dimensionality was 10 and this low dimensional vector was passed to a unit Gaussian prior.

The second part of the architecture, used during fine-tuning, consisted of 10 coupling based neural spline layers with 8 knots and affine coupling layers.  Each conditioner contained 4 residual network layers with a hidden dimension of 4.  The input to this flow is the 10 dimensional output of the first component and the output is fed into a unit Gaussian prior.

\paragraph{Training}
The overall model was trained in two stages.  The first stage optimized the objective in section 4 of \citep{caterini_rectangular_2021} using only the GLOW layers.  The objectives we optimized were:
\begin{align}
    \text{Objective1}_{\text{iPF}} &= \sum_{\x\in \mathcal{D}} -\log p_\z(g(\x)) + \frac{\dim(\z)}{2}\log(\sum_iJ_{ki}(g(\x))^2) + \gamma ||f(g(\x)) - x||^2, \quad k\sim \text{Uniform}(1,\dots,10)
\end{align}
\begin{align}
    \text{Objective1}_{\text{iNF}} &= \sum_{\x\in \mathcal{D}} -\log p_\z(g(\x)) + \frac{1}{2}\log|J(g(\x))^TJ(g(\x))| + \gamma ||f(g(\x)) - x||^2
\end{align}
For both models we set $\gamma=10$, used a batch size of 64, learning rate of $1\times 10^{-4}$ and the AdaBelief \citep{zhuang2020adabelief} optimization algorithm.  We found that it was crucial to use a small learning rate, otherwise training would fail.  These models were trained for approximately 36 hours on either a NVIDIA 3090ti or RTX8000 gpu.

After this stage of training, we combined the GLOW layers with the neural spline layers into one normalizing flow.  We then froze the parameters for the GLOW layers and trained the parameters of the spline layers using a learning rate of $1\times 10^{-3}$ for another 24 hours on the same objective as before, but without the reconstruction error term:
\begin{align}
    \text{Objective2}_{\text{iPF}} &= \sum_{\x\in \mathcal{D}} -\log p_\z(g(\x)) + \frac{\dim(\z)}{2}\log(\sum_iJ_{ki}(g(\x))^2), \quad k\sim \text{Uniform}(1,\dots,10)
\end{align}
\begin{align}
    \text{Objective2}_{\text{iNF}} &= \sum_{\x\in \mathcal{D}} -\log p_\z(g(\x)) + \frac{1}{2}\log|J(g(\x))^TJ(g(\x))|
\end{align}

\begin{figure}
    \centering
    \includegraphics[width=\textwidth]{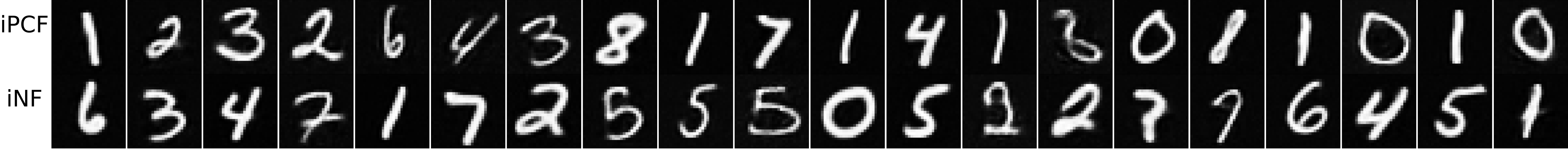}
    \caption{Random samples from the iPF and injective normalizing flow trained in \cref{section:iPF experiment}}
    \label{fig:iPF samples}
\end{figure}

\section{Practical considerations}\label{appendix:practical considerations}
PFs have many nice theoretical properties, but can be difficult to train and interpret in practice.  We find that the constraint $\Ip=0$ can only be satisfied with normalizing flows that are very expressive.  We conjecture that the reason is because the constraint $\Ip=0$ requires that each $O(2^{|\mathcal{P}|})$ possible contour that can be constructed are orthogonal to the other contours.  As a result, we find it necessary to keep $|\mathcal{P}|$ small by using iPFs to model high dimensional data, increasing the size of each partition or using a feature extractor flow that can transform data into a simpler form for the PF.  Furthermore, the latent space of PFs is not trivial to interpret.  While it is true that the latent variables of PFs correspond to different principal manifolds, the index of the dimension corresponding to different principal manifolds can change (see \cref{fig:contour2d} for clear examples of this).  This means that a smooth path through over a principal manifold may require a discontinuous path through the latent space.

\end{document}